%% file: iclr2027_conference.tex
\documentclass{article}

\usepackage{iclr2027_conference,times}

\input{math_commands.tex}

\usepackage[utf8]{inputenc}
\usepackage[T1]{fontenc}
\usepackage{hyperref}
\usepackage{url}
\usepackage{placeins}
\usepackage{booktabs}
\usepackage{amsfonts}
\usepackage{nicefrac}
\usepackage{microtype}
\usepackage{xcolor}
\usepackage{bm}
\usepackage{graphicx}
\usepackage{amsmath}
\usepackage{amssymb}
\usepackage{amsthm}
\usepackage{mathtools}
\usepackage{subfiles}
\usepackage{multirow}
\usepackage{wrapfig}
\usepackage[font=small,labelfont=bf]{caption}
\usepackage{float}
\usepackage{subcaption}

\newtheorem{proposition}{Proposition}

\title{
FAD-KV: Adaptive Dictionary KV Caching
for Efficient Reasoning Decoding
}

\author{%
\textbf{Yuxuan Zhu}\textsuperscript{1},
\textbf{David H. Yang}\textsuperscript{1},
\textbf{Parisa Shiravani}\textsuperscript{1},\\
\textbf{Mohammad Mohammadi Amiri}\textsuperscript{1},
\textbf{Tejaswini Pedapati}\textsuperscript{2},\\
\textbf{Subhajit Chaudhury}\textsuperscript{2},
\textbf{Keerthiram Murugesan}\textsuperscript{2},
\textbf{Pin-Yu Chen}\textsuperscript{2}
\\[0.5em]
\textsuperscript{1}Rensselaer Polytechnic Institute
\quad
\textsuperscript{2}IBM Research
\\[0.3em]
{\small\texttt{\{yangd13,zhuy27,shirap,mamiri\}@rpi.edu}}\\
{\small\texttt{tejaswinip@us.ibm.com}}\\
{\small\texttt{\{keerthiram.murugesan,subhajit,pin-yu.chen\}@ibm.com}}
}

\iclrfinalcopy

\begin{document}

\maketitle

\lhead{}

\begin{abstract}

Key-value (KV) caching is a core mechanism in large language models, where intermediate key and value representations are stored to accelerate autoregressive decoding.\, As generation length increases, KV cache memory and computation grow linearly, making compression critical for efficient long-output reasoning.\, Dictionary-based sparse KV compression is appealing because it represents each token's KV states as a sparse combination of learned basis elements (atoms) in a dictionary, preserving token-level structure without enforcing a global low-rank constraint.\, However, existing dictionary-based methods face two practical limitations during decoding: iterative sparse coding methods such as Orthogonal Matching Pursuit add decoding-time overhead, and static dictionaries may mismatch evolving KV distributions under test-time generation.\, While adaptive dictionary updates can reduce this mismatch, naive append-only strategies lead to dictionary growth during generation.\, We propose \textbf{FAD-KV}, a \textbf{F}ast \textbf{A}daptive \textbf{D}ictionary KV caching framework for efficient reasoning decoding.\, FAD-KV integrates (i) a screening-based sparse coding scheme, where a one-shot Top-\(n\) selection identifies a small candidate pool followed by lightweight coefficient refinement, and (ii) a fixed-size dictionary adaptation mechanism that reuses inactive atoms instead of expanding the dictionary.\, By controlling both sparse coding cost and adaptive dictionary growth, FAD-KV provides a memory-bounded and computation-efficient KV compression framework for long reasoning workloads.\, Experiments on reasoning and long-context benchmarks show that FAD-KV achieves a strong quality-efficiency trade-off while avoiding unbounded dictionary growth during decoding.\,
\end{abstract}

\subfile{Sections/Chapter1}


\subfile{Sections/Chapter3}

\subfile{Sections/Chapter4}

\subfile{Sections/Chapter5}

\subfile{Sections/Chapter6}

\bibliography{Reference}
\bibliographystyle{iclr2027_conference}

\appendix
\section{Additional Details of Sparse Coding Methods}
\label{app:sparse_coding}

This appendix provides additional details for the sparse coding methods used in FAD-KV.\, We first review full Orthogonal Matching Pursuit (OMP), then describe candidate-pruned greedy marginal coding, and finally explain the pool-ridge variant and its first-order redundancy-correction view.

\subsection{Related Works}
\label{app:relatedworks}

\paragraph{KV cache compression for efficient LLM inference.}
The KV cache is a major memory and bandwidth bottleneck in long-context decoding. Prior work reduces KV memory through token eviction or selection, streaming windows, quantization, and mixed-precision cache storage \citep{zhang2023h2o, xiao2023efficient, li2024snapkv, liu2024kivi, hooper2024kvquant, li2025kvtuner, dong2024qaq, he2024zipcache}. Recent reasoning-oriented methods further show that long generated chains shift the bottleneck from prompt prefill to decoding-time cache growth \citep{zhang2025lightthinker, yang2026zoomr, ramachandran2025thinkv}. FAD-KV follows this decoding-aware motivation, but compresses KV vectors through sparse dictionary representations rather than token removal or lower precision.

\paragraph{Dictionary-based and sparse coding-based KV representations.}
Sparse coding and dictionary learning represent data with a small number of atoms from overcomplete bases \citep{engan1999frame,aharon2006k,elad2010sparse}. Lexico applies this idea to activation-level KV cache compression, while CoSpaDi studies sparse dictionary representations for static LLM weights \citep{kim2024lexico, makhov2025cospadi}. Lexico reconstructs KV states using OMP and can compress without token eviction, but OMP requires iterative full-dictionary searches and append-only adaptation grows dictionary size. FAD-KV addresses these two decoding-time bottlenecks with candidate-screened sparse coding and fixed-size replacement.

\paragraph{Adaptive compression under test-time distribution shift.}
Compression schemes learned from calibration data can mismatch the hidden-state distribution induced by long reasoning. Existing adaptive cache methods mainly allocate memory across tokens, heads, layers, tasks, or decoding phases \citep{ge2023model,cai2024pyramidkv,feng2024ada,zhou2024dynamickv,wu2025scope}. FAD-KV is more closely related to representation-level adaptation. OjaKV~\citep{zhu2025ojakv} updates a low-rank subspace online, while Lexico~\citep{kim2024lexico} appends new dictionary atoms when reconstruction error is high. In contrast, FAD-KV keeps dictionary adaptation bounded by replacing inactive atoms under a fixed memory budget.

\subsection{Full OMP}
\label{app:full_omp}

Given an input vector \(\bm{x}\in\mathbb{R}^{d}\) and a dictionary
\[
\bm{D}=[\bm{d}_1,\dots,\bm{d}_N]\in\mathbb{R}^{d\times N},
\]
full OMP seeks an \(s\)-sparse approximation by greedily selecting atoms and refitting coefficients after each selection.\, Starting from
\[
\mathcal{S}_0=\emptyset,
\qquad
\bm{r}_0=\bm{x},
\]
OMP repeats the following steps for \(t=1,\dots,s\).\, First, it selects the atom that has the largest normalized correlation with the current residual:
\[
j_t
=
\arg\max_{j\notin \mathcal{S}_{t-1}}
\frac{(\bm{d}_j^\top\bm{r}_{t-1})^2}
{\|\bm{d}_j\|_2^2+\varepsilon},
\]
where \(\varepsilon>0\) is used for numerical stability.\, It then updates the selected atom index set
\[
\mathcal{S}_t=\mathcal{S}_{t-1}\cup\{j_t\},
\]
and refits the coefficients on the selected atoms:
\[
\bm{\beta}_t
=
\arg\min_{\bm{\beta}\in\mathbb{R}^{|\mathcal{S}_t|}}
\|\bm{x}-\bm{D}_{\mathcal{S}_t}\bm{\beta}\|_2^2
+
\lambda\|\bm{\beta}\|_2^2.
\]
When \(\lambda=0\), this is the standard least-squares refit; when \(\lambda>0\), it is a ridge-regularized refit.\, The residual is then updated as
\[
\bm{r}_t
=
\bm{x}-\bm{D}_{\mathcal{S}_t}\bm{\beta}_t.
\]
After \(s\) steps, OMP stores the selected indices \(\mathcal{S}_s\) and the corresponding coefficients \(\bm{\beta}_s\).

Full OMP is effective because every new atom is selected with respect to the current residual.\, This allows later choices to account for directions that have already been explained.\, However, this also makes OMP costly during autoregressive decoding: each greedy step scans all \(N\) atoms, and this procedure must be repeated for every newly generated key and value vector.

\subsection{Candidate-Pruned Greedy Marginal}
\label{app:candidate_pruned_gm}

Candidate-pruned greedy marginal coding keeps the residual-aware logic of OMP but reduces the search space before the greedy steps.\, It first computes one-shot correlation scores
\[
\bm{c}=|\bm{D}^\top\bm{x}|\in\mathbb{R}^{N},
\]
and forms a candidate pool
\[
\mathcal{C}
=
\mathrm{TopK}(\bm{c},n),
\qquad
n\ll N.
\]
The greedy search is then restricted to \(\mathcal{C}\).\, Starting from \(\mathcal{S}_0=\emptyset\) and \(\bm{r}_0=\bm{x}\), the method selects
\[
j_t
=
\arg\max_{j\in\mathcal{C}\setminus \mathcal{S}_{t-1}}
\frac{(\bm{d}_j^\top\bm{r}_{t-1})^2}
{\|\bm{d}_j\|_2^2+\varepsilon},
\]
updates \(\mathcal{S}_t=\mathcal{S}_{t-1}\cup\{j_t\}\), solves
\[
\bm{\beta}_t
=
\arg\min_{\bm{\beta}\in\mathbb{R}^{|\mathcal{S}_t|}}
\|\bm{x}-\bm{D}_{\mathcal{S}_t}\bm{\beta}\|_2^2
+
\lambda\|\bm{\beta}\|_2^2,
\]
and sets
\[
\bm{r}_t
=
\bm{x}-\bm{D}_{\mathcal{S}_t}\bm{\beta}_t.
\]
The procedure stops once \(|\mathcal{S}_t|=s\).

Thus, candidate-pruned greedy marginal differs from full OMP only in the search domain:
\[
\text{full OMP: } j\in\{1,\dots,N\},
\qquad
\text{candidate-pruned GM: } j\in\mathcal{C}.
\]
If the screened pool contains the atoms that full OMP would select, the two procedures can behave similarly.\, If the screening step misses useful atoms, candidate-pruned greedy marginal may deviate from full OMP.\, This is why the screening recall of \(\mathcal{C}\) is important.

The computational benefit comes from replacing repeated full-dictionary scans with scans over a much smaller pool.\, Full OMP has atom-search cost
\[
O(sNd),
\]
while candidate-pruned greedy marginal has screening cost \(O(Nd)\) and greedy-search cost
\[
O(snd),
\]
giving total atom-search cost
\[
O(Nd+snd).
\]
When \(n\ll N\) and \(s\) is not too large, this substantially reduces sparse coding overhead.

\subsection{Pool-Ridge and First-Order Redundancy Correction}
\label{app:pool_ridge_details}

Pool-ridge is a cheaper one-shot alternative to sequential greedy selection.\, It uses the same screened candidate pool
\[
\mathcal{C}
=
\mathrm{TopK}(|\bm{D}^\top\bm{x}|,n),
\]
but instead of adding atoms one by one, it solves a single ridge problem over all atoms in the pool:
\[
\bm{\beta}_{\mathcal{C}}
=
\arg\min_{\bm{u}\in\mathbb{R}^{|\mathcal{C}|}}
\|\bm{x}-\bm{D}_{\mathcal{C}}\bm{u}\|_2^2
+
\lambda\|\bm{u}\|_2^2.
\]
Equivalently,
\[
\bm{\beta}_{\mathcal{C}}
=
(\bm{D}_{\mathcal{C}}^\top\bm{D}_{\mathcal{C}}+\lambda \bm{I})^{-1}
\bm{D}_{\mathcal{C}}^\top\bm{x}.
\]
The final sparse code keeps the \(s\) largest entries of \(\bm{\beta}_{\mathcal{C}}\) in magnitude and maps them back to a full dictionary code \(\bm{y}\in\mathbb{R}^{N}\):
\[
y_j=
\begin{cases}
(\bm{\beta}_{\mathcal{C}})_j, & j\in \mathrm{TopK}(|\bm{\beta}_{\mathcal{C}}|,s),\\
0, & \text{otherwise}.
\end{cases}
\]
Here, \(\bm{\beta}_{\mathcal{C}}\) is indexed by atoms in \(\mathcal{C}\), so \(\mathrm{TopK}(|\bm{\beta}_{\mathcal{C}}|,s)\) returns the \(s\) selected atom indices from \(\mathcal{C}\).\, Optionally, one may refit the coefficients on this final support, but the main method already obtains a corrected coefficient estimate through the pooled ridge solve.

The main difference from raw top-\(s\) correlation is that pool-ridge accounts for interactions among atoms inside \(\mathcal{C}\).\, To see this, define
\[
\bm{G}_{\mathcal{C}}
=
\bm{D}_{\mathcal{C}}^\top\bm{D}_{\mathcal{C}},
\qquad
\bm{z}_{\mathcal{C}}
=
\bm{D}_{\mathcal{C}}^\top\bm{x}.
\]
Then
\[
\bm{\beta}_{\mathcal{C}}
=
(\bm{G}_{\mathcal{C}}+\lambda \bm{I})^{-1}\bm{z}_{\mathcal{C}}.
\]
Write the Gram matrix as
\[
\bm{G}_{\mathcal{C}}
=
\bm{\Lambda}+\bm{E},
\]
where \(\bm{\Lambda}\) is diagonal and \(\bm{E}\) contains off-diagonal atom correlations.\, Here,
\[
\Lambda_{jj}=\|\bm{d}_j\|_2^2,
\qquad
E_{ij}=\bm{d}_i^\top\bm{d}_j
\quad
(i\ne j).
\]
If \(\|\bm{E}\|\) is small relative to \(\bm{\Lambda}+\lambda \bm{I}\), a first-order expansion gives
\[
(\bm{G}_{\mathcal{C}}+\lambda \bm{I})^{-1}
=
(\bm{\Lambda}+\bm{E}+\lambda \bm{I})^{-1}
\approx
(\bm{\Lambda}+\lambda \bm{I})^{-1}
-
(\bm{\Lambda}+\lambda \bm{I})^{-1}
\bm{E}
(\bm{\Lambda}+\lambda \bm{I})^{-1}.
\]
Multiplying by \(\bm{z}_{\mathcal{C}}\), the coefficient for atom \(j\) satisfies
\[
\beta_j
\approx
\frac{z_j}{\|\bm{d}_j\|_2^2+\lambda}
-
\sum_{i\ne j}
\frac{
(\bm{d}_j^\top\bm{d}_i)z_i
}{
(\|\bm{d}_j\|_2^2+\lambda)(\|\bm{d}_i\|_2^2+\lambda)
}.
\]
The first term is a normalized relevance score: it is large when atom \(j\) is strongly aligned with \(\bm{x}\).\, The second term is a redundancy correction: if atom \(j\) is highly correlated with other atoms that also explain \(\bm{x}\), its effective coefficient is reduced.

This mirrors the residual correction used by greedy methods.\, In residual-based selection, the score for atom \(j\) depends on
\[
\bm{d}_j^\top\bm{r}_{\mathcal{S}}
=
\bm{d}_j^\top\bm{x}
-
\bm{d}_j^\top\bm{D}_{\mathcal{S}}\bm{\beta}_{\mathcal{S}}
=
z_j
-
\sum_{i\in \mathcal{S}}(\bm{d}_j^\top\bm{d}_i)\beta_i.
\]
Thus, both residual-based greedy selection and pool-ridge start from raw relevance and subtract information already explained by correlated atoms.\, The difference is that greedy marginal performs this correction sequentially through residual updates, while pool-ridge performs a one-shot correction through the inverse Gram matrix.

Pool-ridge is therefore not simply Top-\(s\) correlation.\, It is a one-shot coefficient-correction method that accounts for redundancy within the screened pool.\, It is usually cheaper than sequential greedy selection, but it can be less faithful to full OMP because it does not update the residual after each selected atom.

\section{Additional Theoretical Details}
\label{app:theory}

\subsection{Proof of Proposition~\ref{prop:attn_perturb}}
\label{app:proof_attn_perturb}

We prove the three inequalities in Proposition~\ref{prop:attn_perturb}.\, Let
\([t-1]:=\{1,\ldots,t-1\}\) be the attention index set.\, For each position
\(j\in[t-1]\), the attention-score perturbation satisfies
\[
|\hat z_{t,j}-z_{t,j}|
=
\left|
\frac{\langle \bm{q}_t,\hat{\bm{k}}_j\rangle}{\sqrt{d}}
-
\frac{\langle \bm{q}_t,\bm{k}_j\rangle}{\sqrt{d}}
\right|
=
\frac{|\langle \bm{q}_t,\hat{\bm{k}}_j-\bm{k}_j\rangle|}{\sqrt{d}}.
\]
By Cauchy--Schwarz,
\[
|\hat z_{t,j}-z_{t,j}|
\le
\frac{\|\bm{q}_t\|_2\|\hat{\bm{k}}_j-\bm{k}_j\|_2}{\sqrt{d}}
\le
\frac{Q E_k}{\sqrt{d}}.
\]
Taking the maximum over \(j\in[t-1]\) gives
\[
\|\hat{\bm{z}}_t-\bm{z}_t\|_\infty
\le
\frac{Q E_k}{\sqrt{d}}.
\]

Next, using the standard Lipschitz property of softmax from \(\ell_\infty\) to \(\ell_1\),
\[
\|\operatorname{softmax}(\hat{\bm{z}}_t)-\operatorname{softmax}(\bm{z}_t)\|_1
\le
2\|\hat{\bm{z}}_t-\bm{z}_t\|_\infty.
\]
Therefore,
\[
\|\hat{\bm{\alpha}}_t-\bm{\alpha}_t\|_1
\le
\frac{2Q E_k}{\sqrt{d}}.
\]

Finally, decompose the attention-output error as
\[
\hat{\bm{o}}_t-\bm{o}_t
=
\sum_{j\in[t-1]}\hat{\alpha}_{t,j}\hat{\bm{v}}_j
-
\sum_{j\in[t-1]}\alpha_{t,j}\bm{v}_j.
\]
Adding and subtracting
\(\sum_{j\in[t-1]}\alpha_{t,j}\hat{\bm{v}}_j\) gives
\[
\hat{\bm{o}}_t-\bm{o}_t
=
\sum_{j\in[t-1]}(\hat{\alpha}_{t,j}-\alpha_{t,j})\hat{\bm{v}}_j
+
\sum_{j\in[t-1]}\alpha_{t,j}(\hat{\bm{v}}_j-\bm{v}_j).
\]
By the triangle inequality,
\[
\|\hat{\bm{o}}_t-\bm{o}_t\|_2
\le
\sum_{j\in[t-1]}|\hat{\alpha}_{t,j}-\alpha_{t,j}|\|\hat{\bm{v}}_j\|_2
+
\sum_{j\in[t-1]}\alpha_{t,j}\|\hat{\bm{v}}_j-\bm{v}_j\|_2.
\]
Using \(\|\hat{\bm{v}}_j\|_2\le \hat V\),
\(\|\hat{\bm{v}}_j-\bm{v}_j\|_2\le E_v\), and
\(\sum_{j\in[t-1]}\alpha_{t,j}=1\), we obtain
\[
\|\hat{\bm{o}}_t-\bm{o}_t\|_2
\le
\hat V\|\hat{\bm{\alpha}}_t-\bm{\alpha}_t\|_1
+
E_v.
\]
Substituting the attention-weight bound yields
\[
\|\hat{\bm{o}}_t-\bm{o}_t\|_2
\le
\frac{2\hat V Q E_k}{\sqrt{d}}
+
E_v.
\]
This completes the proof.

\subsection{Bounded Memory under Fixed-Size Replacement}
\label{app:proof_bounded_memory}

\begin{proposition}[Bounded dictionary-plus-cache memory]
\label{prop:bounded_memory}
Consider a dictionary \(\bm{D}\in\mathbb{R}^{d\times N}\) during decoding, and suppose the compressed cache stores \(T\) sparse codes with sparsity budget \(s\).\, Each sparse code stores \(s\) atom indices and \(s\) coefficients.\, Under append-only adaptation, if \(A\) new atoms are appended, the total dictionary-plus-cache memory scales as
\[
O\!\left((N+A)d + Ts(b_{\mathrm{idx}}+b_{\mathrm{coef}})\right).
\]
Under fixed-size replacement, the dictionary always has exactly \(N\) atoms, so the total dictionary-plus-cache memory scales as
\[
O\!\left(Nd + Ts(b_{\mathrm{idx}}+b_{\mathrm{coef}})\right)
\]
for all decoding steps.
\end{proposition}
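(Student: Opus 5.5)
The plan is a direct accounting argument: write the total memory as the sum of two independent terms, the dictionary storage and the compressed-cache storage, bound each separately, and then specialize the dictionary term to the two adaptation regimes. The cache term is handled first, since it is common to both. Each of the \(T\) cached tokens is represented by one sparse code consisting of \(s\) atom indices, each of \(b_{\mathrm{idx}}\) bits (e.g.\ \(b_{\mathrm{idx}}=\lceil\log_2 N\rceil\) or a fixed integer width), and \(s\) coefficients, each of \(b_{\mathrm{coef}}\) bits. Summing over tokens gives exactly \(Ts(b_{\mathrm{idx}}+b_{\mathrm{coef}})\), independent of how the dictionary evolves. (If per-head or per-layer caches are counted, this multiplies by a constant that is absorbed into the \(O(\cdot)\).)

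Next I would bound the dictionary term. A dictionary with \(M\) atoms in \(\mathbb{R}^d\) stores \(Md\) floating-point entries, i.e.\ \(O(Md)\) memory with the per-entry precision treated as a constant. Under append-only adaptation, starting from \(N\) atoms and appending \(A\) atoms yields \(M=N+A\) atoms at the current step. This gives the first bound \(O((N+A)d + Ts(b_{\mathrm{idx}}+b_{\mathrm{coef}}))\). Because \(A\) is nondecreasing in the number of decoding steps and can grow with \(T\), this bound is not uniform in time.

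For fixed-size replacement, I would prove by induction on the decoding step that the dictionary always has exactly \(N\) columns. The base case is the initial dictionary \(\bm{D}\in\mathbb{R}^{d\times N}\). For the inductive step, each adaptation event overwrites an inactive column in place, replacing some \(\bm{d}_j\) with a new atom, rather than concatenating a column. The column count is therefore preserved, while steps with no adaptation leave \(\bm{D}\) unchanged. Hence \(M=N\) at every step, and the total memory is \(O(Nd + Ts(b_{\mathrm{idx}}+b_{\mathrm{coef}}))\) for all steps, with the dictionary part bounded independently of the number of replacements.

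The main obstacle is not the arithmetic but making the model assumptions explicit so that the claims are well-posed. First, the index width \(b_{\mathrm{idx}}\) must stay fixed under replacement. This holds because indices still range over \(\{1,\dots,N\}\), whereas under append-only adaptation it may grow as \(\log(N+A)\), which only strengthens the contrast. Second, replacement must not require auxiliary per-atom storage beyond \(O(N)\), for example activity counters used to detect inactive atoms. Such storage is \(O(N)\le O(Nd)\) and is absorbed. Third, I would note the caveat that replacing an atom \(\bm{d}_j\) changes the meaning of any stored codes referencing index \(j\). The proposition concerns memory only, so this affects accuracy rather than the bound, and I would flag it explicitly rather than resolve it.
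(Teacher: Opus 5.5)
Your proposal is correct and follows the paper's accounting argument: the $O(Ts(b_{\mathrm{idx}}+b_{\mathrm{coef}}))$ cache term is common to both regimes, append-only adaptation yields $N+A$ columns, and in-place overwriting keeps exactly $N$ columns at every step. Your closing caveat does not apply here, because FAD-KV only replaces atoms in $\mathcal{U}_0$ (those with $u_j=0$), so no stored code references the overwritten index.
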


\begin{proof}
The compressed KV cache stores sparse representations rather than dense KV vectors.\,
For each cached vector, it stores \(s\) atom indices and \(s\) coefficients, so \(T\) cached vectors require
\[
O\!\left(Ts(b_{\mathrm{idx}}+b_{\mathrm{coef}})\right)
\]
memory, independent of whether the dictionary is append-only or fixed-size.

It remains to compare dictionary storage.\,
Under append-only adaptation, each triggered update adds one new atom to the dictionary.\,
If \(A\) updates are triggered during decoding, the dictionary contains \(N+A\) atoms, so its storage cost is
\[
O((N+A)d).
\]
Thus, the total dictionary-plus-cache memory is
\[
O\!\left((N+A)d + Ts(b_{\mathrm{idx}}+b_{\mathrm{coef}})\right).
\]

Under fixed-size replacement, each triggered update overwrites one existing atom,
\[
\bm{d}_{j^\star}\leftarrow \bm{a}_{\mathrm{new}},
\]
and the number of dictionary columns does not change.\,
Therefore, the dictionary always contains exactly \(N\) atoms, with storage cost \(O(Nd)\).\, 
Combining this with the sparse-cache storage gives
\[
O\!\left(Nd + Ts(b_{\mathrm{idx}}+b_{\mathrm{coef}})\right).
\]
This proves the claim.
\end{proof}

\subsection{Weak Submodularity of the Reconstruction Utility}
\label{app:weak_submodularity}

We provide more details for the weak-submodular view used in Section~\ref{sec:theory}.\, For a fixed vector \(\bm{x}\) and candidate atom index set \(\mathcal{C}\), define
\[
f(\mathcal{S})
=
\|\bm{x}\|_2^2
-
\min_{\bm{\beta}\in\mathbb{R}^{|\mathcal{S}|}}
\|\bm{x}-\bm{D}_{\mathcal{S}}\bm{\beta}\|_2^2,
\qquad
\mathcal{S}\subseteq\mathcal{C}.
\]
The function \(f(\mathcal{S})\) is the amount of squared norm of \(\bm{x}\) explained by the atoms indexed by \(\mathcal{S}\).\, Equivalently, when \(\bm{D}_{\mathcal{S}}^\top\bm{D}_{\mathcal{S}}\) is invertible, the least-squares solution gives
\[
f(\mathcal{S})
=
\bm{x}^\top
\bm{P}_{\mathcal{S}}
\bm{x},
\]
where
\[
\bm{P}_{\mathcal{S}}
=
\bm{D}_{\mathcal{S}}
(\bm{D}_{\mathcal{S}}^\top\bm{D}_{\mathcal{S}})^{-1}
\bm{D}_{\mathcal{S}}^\top
\]
is the projection matrix onto the span of \(\bm{D}_{\mathcal{S}}\).\, Thus, \(f(\mathcal{S})\) measures the energy of \(\bm{x}\) captured by the span of the selected atoms.

\paragraph{Monotonicity.}
For any \(\mathcal{A}\subseteq \mathcal{B}\subseteq\mathcal{C}\), the span of \(\bm{D}_{\mathcal{A}}\) is contained in the span of \(\bm{D}_{\mathcal{B}}\).\, Therefore,
\[
\min_{\bm{\beta}}
\|\bm{x}-\bm{D}_{\mathcal{B}}\bm{\beta}\|_2^2
\le
\min_{\bm{\beta}}
\|\bm{x}-\bm{D}_{\mathcal{A}}\bm{\beta}\|_2^2,
\]
which implies
\[
f(\mathcal{A})\le f(\mathcal{B}).
\]
Hence, the reconstruction utility is monotone.

\paragraph{Why exact submodularity may fail.}
A set function is submodular if it satisfies exact diminishing returns:
\[
f(\mathcal{A}\cup\{j\})-f(\mathcal{A})
\ge
f(\mathcal{B}\cup\{j\})-f(\mathcal{B}),
\qquad
\mathcal{A}\subseteq \mathcal{B},\; j\notin \mathcal{B}.
\]
For orthogonal dictionaries, reconstruction gains are additive across atoms, and this property is easier to satisfy.\, However, learned overcomplete dictionaries are generally correlated.\, The marginal gain of one atom can depend on which correlated atoms have already been selected.\, Because coefficients are refit jointly over \(\mathcal{S}\), adding an atom changes the optimal coefficients of previously selected atoms.\, This interaction prevents exact submodularity in general.

\paragraph{Weak submodularity.}
Although exact submodularity may fail, sparse regression objectives of this form are known to be weakly submodular under restricted conditioning assumptions~\citep{das2011submodular, elenberg2018restricted}.\, One common way to quantify this is through the submodularity ratio.\, For disjoint sets \(\mathcal{L},\mathcal{S}\subseteq\mathcal{C}\), define
\[
\gamma_{\mathcal{L},\mathcal{S}}
=
\frac{
\sum_{j\in \mathcal{S}}\left(f(\mathcal{L}\cup\{j\})-f(\mathcal{L})\right)
}{
f(\mathcal{L}\cup \mathcal{S})-f(\mathcal{L})
}.
\]
If \(\gamma_{\mathcal{L},\mathcal{S}}\) is bounded below by a positive constant \(\gamma>0\) for all relevant sets, then \(f\) is weakly submodular with ratio \(\gamma\).

For sparse regression, this ratio can be controlled by restricted eigenvalues of the dictionary.\, Suppose all columns in \(\bm{D}_{\mathcal{C}}\) are normalized and for all sets \(\mathcal{S}\) with \(|\mathcal{S}|\le 2s\),
\[
\mu_{2s}\|\bm{u}\|_2^2
\le
\|\bm{D}_{\mathcal{S}}\bm{u}\|_2^2
\le
L_{2s}\|\bm{u}\|_2^2.
\]
Then the submodularity ratio is lower bounded up to constants by the sparse condition number:
\[
\gamma
\gtrsim
\frac{\mu_{2s}}{L_{2s}}.
\]
When screened sub-dictionaries are well conditioned, \(\mu_{2s}/L_{2s}\) is not too small, and greedy selection can be expected to make steady progress.

\paragraph{Greedy approximation inside the screened pool.}
Let \(\mathcal{S}_s\) be the set selected by exact greedy maximization of marginal gain over \(\mathcal{C}\) under the constraint \(|\mathcal{S}_s|\le s\).\, For a monotone weakly submodular function with ratio \(\gamma\), the greedy solution satisfies
\[
f(\mathcal{S}_s)
\ge
\left(1-\exp(-\gamma)\right)
\max_{\substack{\mathcal{S}_s\subseteq\mathcal{C}\\ |\mathcal{S}_s|\le s}}
f(\mathcal{S}).
\]
This guarantee is relative to the candidate pool \(\mathcal{C}\), not the full dictionary.\, Therefore, FAD-KV depends on two separate conditions.\, First, the screening step should include high-utility atoms in \(\mathcal{C}\).\, Second, the atoms in \(\mathcal{C}\) should not be too ill conditioned.\, These two conditions motivate the empirical analysis of screening recall and reconstruction quality in Section~\ref{sec:experiments}.

\subsection{Connection between Greedy Marginal Coding and Reconstruction Utility}
\label{app:greedy_marginal_connection}

The exact marginal gain of adding atom \(j\) to a selected atom index set \(\mathcal{S}\) is
\[
\Delta(j\mid \mathcal{S})
=
f(\mathcal{S}\cup\{j\})-f(\mathcal{S}).
\]
Let
\[
\bm{r}_{\mathcal{S}}
=
\bm{x}-\bm{D}_{\mathcal{S}}\bm{\beta}_{\mathcal{S}}
\]
be the residual after the least-squares or ridge refit on \(\mathcal{S}\).\, If atom \(\bm{d}_j\) is normalized and we ignore the small effect of refitting the previous coefficients, the gain from adding \(j\) is approximately proportional to
\[
(\bm{d}_j^\top \bm{r}_{\mathcal{S}})^2.
\]
More generally, for non-normalized atoms, a scale-corrected score is
\[
\frac{(\bm{d}_j^\top \bm{r}_{\mathcal{S}})^2}{\|\bm{d}_j\|_2^2+\varepsilon}.
\]
This is the score used by the greedy marginal path in FAD-KV.\, Thus, the algorithm can be viewed as an efficient approximation to marginal-gain maximization of the reconstruction utility, restricted to the screened pool \(\mathcal{C}\).

\subsection{Connection between Pool-Ridge Coding and Redundancy Correction}
\label{app:pool_ridge_connection}

The pool-ridge variant can also be interpreted through the same reconstruction-utility view.\, Given a screened pool \(\mathcal{C}\), it solves
\[
\bm{\beta}_{\mathcal{C}}
=
(\bm{D}_{\mathcal{C}}^\top\bm{D}_{\mathcal{C}}+\lambda \bm{I})^{-1}
\bm{D}_{\mathcal{C}}^\top\bm{x}.
\]
Let
\[
\bm{G}_{\mathcal{C}}
=
\bm{D}_{\mathcal{C}}^\top\bm{D}_{\mathcal{C}},
\qquad
\bm{z}_{\mathcal{C}}
=
\bm{D}_{\mathcal{C}}^\top\bm{x}.
\]
Write
\[
\bm{G}_{\mathcal{C}}
=
\bm{\Lambda}+\bm{E},
\]
where \(\bm{\Lambda}\) is diagonal and \(\bm{E}\) contains the off-diagonal atom correlations.\, When \(\|\bm{E}\|\) is small relative to \(\bm{\Lambda}+\lambda \bm{I}\), a first-order expansion gives
\[
(\bm{G}_{\mathcal{C}}+\lambda \bm{I})^{-1}
\approx
(\bm{\Lambda}+\lambda \bm{I})^{-1}
-
(\bm{\Lambda}+\lambda \bm{I})^{-1}
\bm{E}
(\bm{\Lambda}+\lambda \bm{I})^{-1}.
\]
Therefore, the coefficient of atom \(j\) has the approximate form
\[
\beta_j
\approx
\frac{z_j}{\|\bm{d}_j\|_2^2+\lambda}
-
\sum_{i\ne j}
\frac{
(\bm{d}_j^\top\bm{d}_i)z_i
}{
(\|\bm{d}_j\|_2^2+\lambda)(\|\bm{d}_i\|_2^2+\lambda)
}.
\]
The first term is the raw relevance of atom \(j\) to \(\bm{x}\), while the second term subtracts redundancy explained by correlated atoms.\, This mirrors the residual correction used by greedy marginal coding, but performs it in one pooled solve rather than through sequential residual updates.

\section{Experimental Details}
\label{app:experimental_details}

This appendix provides additional details about the models, datasets, dictionary training procedure, decoding hyperparameters, and compute setup used in our experiments.

\subsection{Models and Datasets}
\label{app:models_datasets}

We evaluate FAD-KV on instruction-tuned decoder-only language models from
different model families, including Llama-3.2-3B-Instruct,
Llama-3.1-8B-Instruct, and Qwen3-4B-Instruct.

For reasoning evaluation, we use GSM8K, MATH500, GPQA-Diamond (GPQA-D),
and LongReason. GSM8K and MATH500 evaluate mathematical reasoning,
GPQA-D evaluates graduate-level scientific reasoning, and LongReason
targets longer-form reasoning under extended generation.

For long-context evaluation, we use RULER and LongBench. RULER evaluates
long-range retrieval at 8K and 16K context lengths, while LongBench covers
single-document QA, multi-document QA, summarization, and code completion.
Together, these benchmarks test whether compressed KV representations preserve
reasoning quality and long-context retrieval and generation behavior.

All methods use the same prompts, decoding settings, answer extraction rules,
and scoring scripts within each benchmark. For reasoning tasks, we extract the
final answer and apply the corresponding benchmark-specific evaluation rule.
For RULER and LongBench, we follow their standard task-specific metrics.

\subsection{Dictionary Training Details}
\label{app:dictionary_training}

For each model, we train separate dictionaries for key and value vectors using KV activations collected from WikiText-103~\citep{merity2016wikitext} as the calibration set.\, Dictionaries are layer-specific, and keys and values are treated independently unless otherwise stated.\, Therefore, each layer has one key dictionary and one value dictionary.

Given calibration vectors $\{\bm{x}_i\}$ from a fixed layer and KV type, dictionary training follows the sparse reconstruction objective
\[
\min_{\bm{D},\{\bm{y}_i\}}
\sum_i \|\bm{x}_i-\bm{D}\bm{y}_i\|_2^2
\qquad
\mathrm{s.t.}\qquad
\|\bm{y}_i\|_0\le s,
\]
where $\bm{D}\in\mathbb{R}^{d\times N}$ is the learned dictionary, $N$ is the number of atoms, $d$ is the head dimension or vector dimension being compressed, and $s$ is the sparsity budget. During inference, the learned dictionaries are fixed initially and then updated only by the adaptive mechanisms described in the main text.

For FAD-KV, the offline dictionary is used as the base dictionary. The fixed-size adaptive update does not increase the number of atoms. When adaptation is triggered, one currently inactive atom is replaced by a context-relevant atom constructed from the current vector or residual. Since the replaced atom is not referenced by any currently cached sparse code, previously encoded cache entries remain valid and are not re-encoded after replacement.

\subsection{Decoding Settings and Hyperparameters}
\label{app:decoding_hyperparams}

All methods use the same decoding configuration within each benchmark and model. During prefilling, prompt KV vectors are encoded and stored in the compressed cache. During decoding, each newly generated key and value vector is encoded before being inserted into the cache. Full KV stores dense keys and values and serves as the reference.

For dictionary-based methods, the main hyperparameters are the dictionary size \(N\), sparsity budget \(s\), candidate-pool size \(n\), ridge parameter \(\lambda\), and adaptation threshold \(\delta\).\, Greedy Marginal uses one-shot screening to form a candidate pool \(\mathcal{C}\) of size \(n\), then performs residual-aware greedy selection inside \(\mathcal{C}\) until \(s\) atoms are selected.\, Pool-Ridge uses the same screened pool but solves one ridge problem over \(\mathcal{C}\) and keeps the Top-\(s\) coefficients by magnitude.

The reconstruction error used for triggering adaptation is
\[
e(\bm{x};\bm{D})=\frac{\|\bm{x}-\bm{D}\bm{y}\|_2}{\|\bm{x}\|_2}.
\]
If \(e(\bm{x};\bm{D})>\delta\), an adaptive update is triggered. In append-only adaptive baselines, a new atom is appended to the dictionary. In FAD-KV, one currently inactive atom is replaced, keeping the dictionary size fixed. Usage counts are updated whenever the corresponding entry of a stored sparse code is nonzero, and replacement is restricted to atoms with zero current cache usage.

Unless otherwise specified, all dictionary-based methods use the same base dictionary size, sparsity budget, and adaptation threshold within the same experiment. This ensures that comparisons isolate the effects of sparse coding path and adaptive update strategy rather than changes in the base memory budget.

Detailed parameter settings are shown in Table~\ref{tab:hyperparams_appendix}.

\begin{table}[!htbp]
\centering
\caption{Default hyperparameters used in dictionary-based experiments.}
\label{tab:hyperparams_appendix}
\scriptsize
\setlength{\tabcolsep}{4pt}
\begin{tabular}{lcc}
\toprule
\textbf{Hyperparameter} & \textbf{Symbol} & \textbf{Value} \\
\midrule
Dictionary size & \(N\) & 4096 (3B and 4B models); 8192 (8B model) \\
Sparsity budget & \(s\) & 8 (3B and 4B models); 16 (8B model)\\
Candidate-pool size & \(n\) & 256 (3B and 4B models); 512 (8B model) \\
Ridge parameter & \(\lambda\) & \(10^{-4}+0.20\,\overline{\mathrm{diag}(\bm{G})}\) \\
Adaptation threshold & \(\delta\) & 0.50 (key) / 0.70 (value) \\
Maximum new tokens &  & 1024 (MATH500, GSM8K, LongBench)\\
Decoding temperature &  & 0 (greedy) \\
Top-\(p\) / top-\(k\) sampling & & disabled \\
\bottomrule
\end{tabular}
\end{table}

Here, \(\bm{G}=\bm{D}_{\mathcal{C}}^\top\bm{D}_{\mathcal{C}}\) is the pool Gram matrix used in the ridge solve, and \(\overline{\mathrm{diag}(\bm{G})}\) denotes the mean of its diagonal entries.

\subsection{Compute Resources}
\label{app:compute_resources}

All experiments were run on NVIDIA H100 96GB GPUs.\,
Latency and memory measurements were collected with batch size 1 under the same inference backend and decoding configuration for all compared methods within each model.\,
Dictionary-based methods store dictionaries on GPU and store sparse atom indices and coefficients in the compressed cache.\,
Full KV stores dense key and value tensors and is used as the dense-cache reference.

\end{document}

%% file: math_commands.tex
\usepackage{amsmath,amsfonts,bm}

\def\eqref#1{equation~\ref{#1}}

\def\1{\bm{1}}

\DeclareMathAlphabet{\mathsfit}{\encodingdefault}{\sfdefault}{m}{sl}
\SetMathAlphabet{\mathsfit}{bold}{\encodingdefault}{\sfdefault}{bx}{n}

\newcommand{\softmax}{\mathrm{softmax}}



%% file: Sections/Chapter1.tex
\section{Introduction}

Large language models (LLMs) increasingly rely on long-form reasoning during inference to solve tasks such as mathematical problem solving, code generation, and multi-step decision making~\citep{nam2024using,setlur2024rl,kumichev2024medsyn}.\, In these settings, autoregressive decoding can produce long outputs and maintain a large effective context, making the key-value (KV) cache a major bottleneck.\, Since standard KV caching stores one key and one value vector for every processed token at every layer, its memory footprint and memory traffic grow linearly with sequence length, which can substantially reduce decoding efficiency in long-context and reasoning workloads \citep{xiao2023efficient, zhang2023h2o, liu2024kivi, tang2024quest, kim2024lexico}.

A broad line of work studies KV cache compression through quantization \citep{hooper2024kvquant, liu2024kivi}, token eviction or selection \citep{xiao2023efficient, zhang2023h2o, li2024snapkv, tang2024quest, zhu2025sentencekv}, and low-rank or subspace approximation \citep{saxena2024eigen, lin2024matryoshkakv}.\, Dictionary-based sparse compression is a complementary direction: it represents each KV vector with a small number of atoms from a learned dictionary, preserving token-level information without directly evicting tokens or forcing all vectors into a shared low-rank subspace \citep{kim2024lexico}.\, However, applying dictionary-based KV compression to long reasoning decoding raises two practical challenges.\, First, sparse coding with iterative solvers such as Orthogonal Matching Pursuit (OMP) can introduce noticeable per-token latency.\, Second, because the dictionary is learned offline, generated KV states may deviate from the calibration distribution during long reasoning.\, Adaptive dictionary updates can reduce this mismatch, but a naive append-only strategy causes the dictionary to grow during generation, weakening the memory savings of compression.

To make this concrete, dictionary-based KV compression represents each key or value vector \(\bm{x}\) as a sparse linear combination of atoms from an offline-learned dictionary \(\bm{D}=[\bm{d}_1,\dots,\bm{d}_N]\), i.e., \(\bm{x} \approx \bm{D}\bm{y}\).\, At inference time, a greedy sparse coding solver such as OMP selects a small support of atoms and estimates the corresponding sparse coefficient vector \(\bm{y}\) for each incoming KV vector where \(\|\bm{y}\|_0\le s\), where $s$ is the sparsity budget, and the compressed cache stores these sparse indices and coefficients instead of the dense vector.\, Full-dictionary OMP is expensive during autoregressive decoding because each greedy step scans all \(N\) atoms using the current residual, refits the coefficients on the selected support, and repeats this process until the sparsity budget is reached.\, Since this encoding is repeated for every newly generated key and value vector, full OMP can add substantial decoding-time overhead.\, Meanwhile, because \(\bm{D}\) is learned offline from calibration data, its reconstruction quality can degrade when generated KV states drift during long reasoning.\, A naive way to reduce this mismatch is to append new atoms from poorly reconstructed KV vectors, but such append-only adaptation makes the dictionary size grow with generation length.\, These two observations motivate our goal: reduce the sparse-coding search cost while keeping dictionary adaptation memory-bounded.\,
\begin{figure}[!t]
    \centering
    \includegraphics[width=\linewidth]{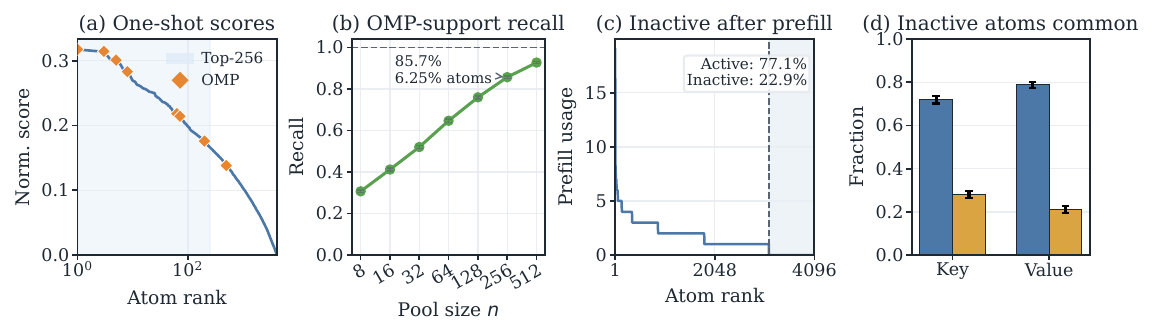}
    \caption{
    Motivation for FAD-KV.\ Diagnostics are computed using Llama-3.2-3B on MATH-500 with \(4096\)-atom key/value dictionaries and sparsity \(s=8\).\ (a)--(b) Decoding-time KV vectors have head-heavy dictionary correlations; we visualize value-vector scores as an example, and show that a small Top-\(n\) screened pool recalls most full-OMP atoms.\ (c)--(d) many atoms remain inactive after prefill, and inactive capacity is common across prompts.\ These observations motivate screening-based sparse coding and fixed-size dictionary adaptation.
    }
    \label{fig:intro_motivation}
    \vspace{-.6cm}
\end{figure}

Our motivation is that both challenges can be addressed by exploiting two forms of unused structure in dictionary-based KV compression: sparse correlation structure during atom selection and inactive dictionary capacity during adaptation.\, We illustrate these structures with a diagnostic from our main experimental setting, using Llama-3.2-3B-Instruct~\citep{grattafiori2024llama3herdmodels} model on MATH-500 dataset~\citep{hendrycks2021measuring, lightman2023let} with a dictionary of \(4096\)-atom and sparsity budget \(s=8\).\, For a KV vector \(\bm{x}\) and unit-normalized dictionary atoms \(\bm{d}_j\), we define the one-shot normalized correlation score as \(c_j = |\bm{d}_j^\top \bm{x}| / \|\bm{x}\|_2\), and form a candidate pool by taking the Top-\(n\) atoms under this score.\, For sparse coding, decoding-time KV vectors often have head-heavy dictionary correlations: as shown in Figure~\ref{fig:intro_motivation}(a), only a small fraction of atoms has large normalized correlation \(c_j\) with the current KV vector \(\bm{x}\), visualized here using a representative value vector.\, More importantly, this high-score region already contains most atoms that full OMP would select.\, Figure~\ref{fig:intro_motivation}(b) shows that a Top-\(256\) pool recalls \(85.7\%\) of the full-OMP support while searching only \(6.25\%\) of the dictionary.\, This suggests that expensive full-dictionary OMP search is often unnecessary: one-shot screening can first prune the dictionary to a small high-recall candidate pool, after which lightweight atom selection and coefficient correction can operate only within this pool.\,


For dictionary adaptation, we observe that the offline dictionary also contains unused capacity at test time.\, The dictionary is trained on calibration activations, but long reasoning trajectories may shift away from this offline distribution, causing some KV vectors to be poorly reconstructed.\, A natural response is to append new atoms online, but repeated appends make the memory budget input-dependent and costly for longer outputs.\, In contrast, Figure~\ref{fig:intro_motivation}(c) shows that \(22.9\%\) of dictionary atoms remain inactive after prefill for a representative prompt, and Figure~\ref{fig:intro_motivation}(d) shows that the inactive capacity is common across prompts for both key and value dictionaries.\, This motivates fixed-size adaptation, where instead of growing the dictionary, we replace currently inactive atoms with context-relevant adaptive atoms.

We propose \textbf{FAD-KV} (\textbf{F}ast \textbf{A}daptive \textbf{D}ictionary \textbf{KV} caching), a dictionary-based KV compression framework for efficient reasoning decoding.\, FAD-KV combines two inference-time mechanisms.\, First, it replaces full OMP over the entire dictionary with screening-based sparse approximation: a one-shot Top-\(n\) step identifies a high-recall candidate pool, and a lightweight correction step selects and refits atoms inside this pool.\, Second, it replaces append-only dictionary growth with a fixed-size update rule that swaps inactive atoms with context-relevant adaptive atoms.\, As a result, FAD-KV controls both sparse coding overhead and dictionary memory growth under a strict memory budget.\ For related works, refer to Appendix \ref{app:relatedworks}.

Our contributions are threefold.\,
First, we identify two practical bottlenecks in dictionary-based KV compression for reasoning-heavy decoding: iterative sparse coding latency and unbounded adaptive dictionary growth under test-time distribution shift.\,
Second, we propose FAD-KV, which combines screening-based sparse coding with fixed-size online dictionary adaptation.\,
Third, we show that FAD-KV reduces sparse coding overhead and avoids adaptive dictionary growth while providing a strong quality-efficiency trade-off under a fixed memory budget.

%% file: Sections/Chapter3.tex
\vspace{-.2cm}
\section{Dictionary-Based KV Compression}
\label{sec:background}

\subsection{Attention and Sparse KV Representation}

For layer $\ell$, head $h$, and decoding step $t$, let
$\bm{q}_t^{(\ell,h)}\in\mathbb{R}^{1\times d}$ denote the current query vector, and let
$\bm{K}^{(\ell,h)},\bm{V}^{(\ell,h)}\in\mathbb{R}^{(t-1)\times d}$
denote the cached keys and values from previous positions.\,
Standard causal attention computes
\setlength{\abovedisplayskip}{4pt}
\setlength{\belowdisplayskip}{4pt}
\[
\bm{z}_t^{(\ell,h)}
=
\bm{q}_t^{(\ell,h)}(\bm{K}^{(\ell,h)})^\top / \sqrt{d},
\qquad
\bm{\alpha}_t^{(\ell,h)}
=
\mathrm{softmax}(\bm{z}_t^{(\ell,h)}),
\qquad
\bm{o}_t^{(\ell,h)}
=
\bm{\alpha}_t^{(\ell,h)}\bm{V}^{(\ell,h)}.
\]
Since all past keys and values are retained for future decoding steps, KV cache memory grows linearly with sequence length.

Dictionary-based KV compression instead represents each key or value by a sparse linear combination of learned atoms.\,
Let
$\bm{D}_k^{(\ell,h)}\in\mathbb{R}^{d\times N_k}$ and
$\bm{D}_v^{(\ell,h)}\in\mathbb{R}^{d\times N_v}$
be the key and value dictionaries for layer $\ell$ and head $h$.
Each key and value is reconstructed as
\[
\hat{\bm{k}}_t^{(\ell,h)}
=
\bm{D}_k^{(\ell,h)}\bm{y}_{k,t}^{(\ell,h)},
\qquad
\hat{\bm{v}}_t^{(\ell,h)}
=
\bm{D}_v^{(\ell,h)}\bm{y}_{v,t}^{(\ell,h)},
\]
where
$\bm{y}_{k,t}^{(\ell,h)}\in\mathbb{R}^{N_k}$ and
$\bm{y}_{v,t}^{(\ell,h)}\in\mathbb{R}^{N_v}$
are sparse coefficient vectors.
The GPU-resident compressed cache stores only the sparse indices and nonzero coefficients of
$\bm{y}_{k,t}^{(\ell,h)}$ and $\bm{y}_{v,t}^{(\ell,h)}$,
while the dictionaries
$\bm{D}_k^{(\ell,h)}$ and $\bm{D}_v^{(\ell,h)}$
are stored once and shared across cached tokens.

Given reconstructed cached matrices
$\hat{\bm{K}}^{(\ell,h)},\hat{\bm{V}}^{(\ell,h)}
\in\mathbb{R}^{(t-1)\times d}$,
compressed attention becomes
\[
\hat{\bm{z}}_t^{(\ell,h)}
=
\bm{q}_t^{(\ell,h)}(\hat{\bm{K}}^{(\ell,h)})^\top / \sqrt{d},
\qquad
\hat{\bm{\alpha}}_t^{(\ell,h)}
=
\mathrm{softmax}(\hat{\bm{z}}_t^{(\ell,h)}),
\qquad
\hat{\bm{o}}_t^{(\ell,h)}
=
\hat{\bm{\alpha}}_t^{(\ell,h)}\hat{\bm{V}}^{(\ell,h)}.
\]

\subsection{Offline Training and Inference-Time Encoding}

The dictionaries are trained offline using KV activations from a calibration set,
following sparse dictionary learning \citep{aharon2006k, kim2024lexico}.
Given training key or value vectors $\{\bm{x}_i\}$, offline training jointly
optimizes the dictionary and sparse codes, whereas inference fixes the learned
dictionary and solves only for the sparse code of each new vector:
\begin{equation}
\begin{aligned}
\text{offline:}\quad
&\min\nolimits_{\bm{D},\{\bm{y}_i\}}
\sum\nolimits_i \|\bm{x}_i-\bm{D}\bm{y}_i\|_2^2
\quad
\mathrm{s.t.}\quad
\|\bm{y}_i\|_0 \le s, \\
\text{inference:}\quad
&\min\nolimits_{\bm{y}\in\mathbb{R}^{N}}
\|\bm{x}-\bm{D}\bm{y}\|_2^2
\quad
\mathrm{s.t.}\quad
\|\bm{y}\|_0\le s.
\end{aligned}
\label{eq:sparse_inference}
\end{equation}
Here, $\bm{x}\in\mathbb{R}^{d}$ is the key or value vector to encode,
$s$ is the sparsity budget, and
$\bm{D}=[\bm{d}_1,\dots,\bm{d}_N]\in\mathbb{R}^{d\times N}$ is the
corresponding dictionary.
OMP \citep{pati1993orthogonal} is a common inference-time solver, greedily
selecting atoms from the current residual and refitting their coefficients.
This repeated search is costly during autoregressive decoding, and the
offline-trained dictionary may also mismatch test-time KV states.
\subsection{Offline Dictionaries under Test-Time Shift}

Offline dictionaries may not fully cover the KV states produced during long reasoning trajectories.\, To detect such mismatch, one can monitor the relative reconstruction error
\(e(\bm{x};\bm{D})=\|\bm{x}-\bm{D}\bm{y}\|_2/\|\bm{x}\|_2\),
where \(\bm{y}\) is the sparse code produced by inference-time encoding.\, When this error is large, the current dictionary may not provide a good representation of the new vector \(\bm{x}\).\, A direct adaptive strategy is to append a new atom derived from the current vector or residual.\, However, repeated appends make the dictionary size grow during generation, so the memory budget becomes input-dependent.\, This creates a tension between test-time adaptivity and bounded memory, motivating the fixed-size adaptive update rule introduced in Section~\ref{sec:method}.

%% file: Sections/Chapter4.tex
\section{Methodology}
\label{sec:method}

\begin{figure}[t]
    \centering
    \includegraphics[width=0.85\linewidth]{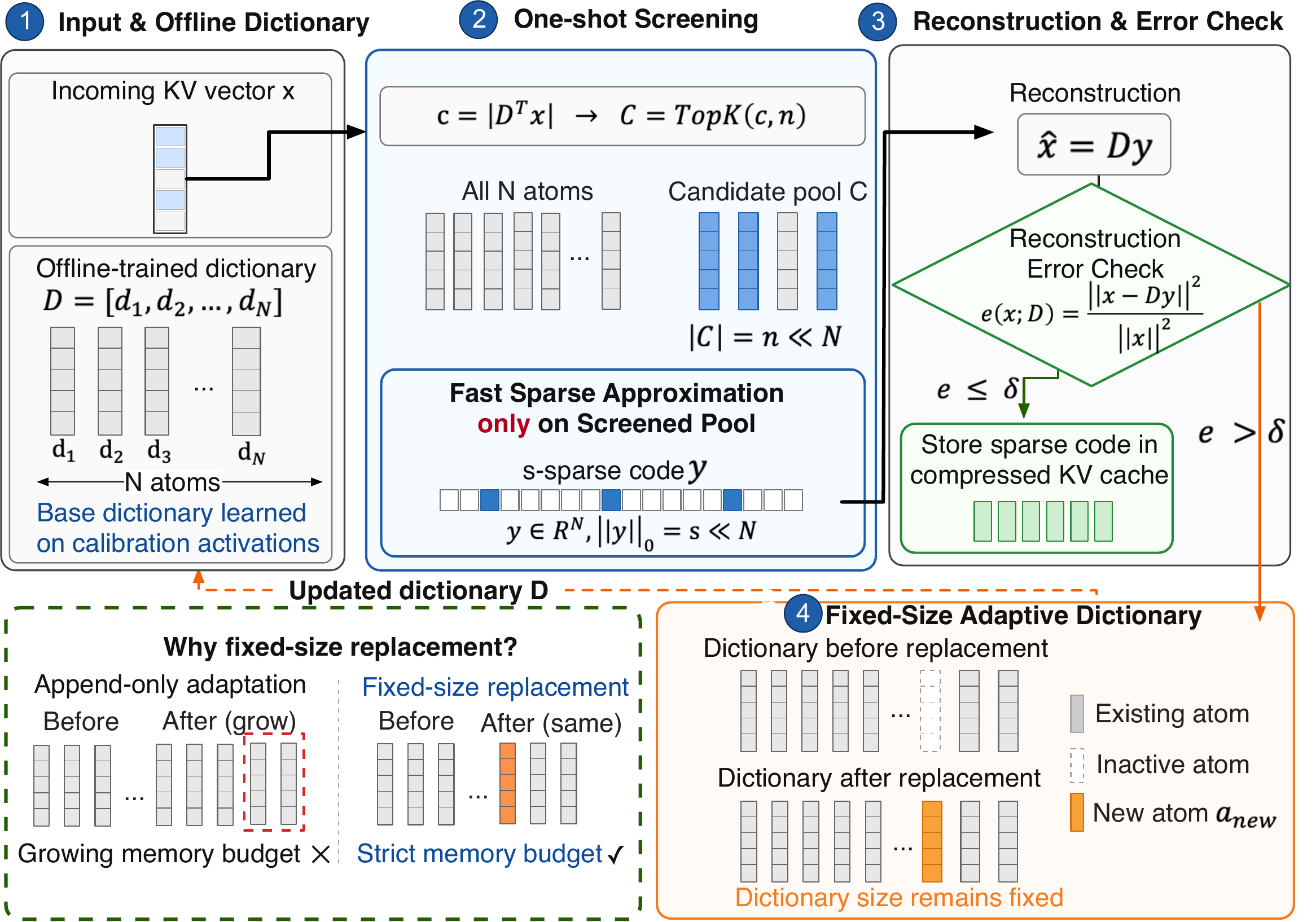}
    \caption{
    Overview of FAD-KV.\, Each incoming KV vector is first encoded through screening-based sparse coding.\, If reconstruction error is high, FAD-KV replaces an inactive atom with a context-relevant atom instead of appending a new one, keeping the dictionary size fixed throughout decoding.
    }
    \label{fig:fadkv_overview}
\end{figure}

We propose \textbf{FAD-KV}, a fixed-size adaptive dictionary KV caching framework for efficient reasoning decoding.\ Starting from offline-trained key and value dictionaries, FAD-KV modifies only the inference-time compression procedure.\ For each incoming key or value vector $\bm{x}\in\mathbb{R}^{d}$, we encode it over the current dictionary
\(
\bm{D}=[\bm{d}_1,\dots,\bm{d}_N]\in\mathbb{R}^{d\times N},
\)
where $N$ is the number of atoms in the key or value dictionary.\ The compressed cache stores the sparse representation (the selected atom indices and their nonzero coefficients) of the code \(\bm{y}\) for both key and value, rather than the dense KV vector \(\bm{x}\).\, The method targets two inference-time bottlenecks of dictionary-based KV compression: full OMP finding the sparse code \(\bm{y}\) is too expensive for token-wise decoding, and naive append-only dictionary adaptation causes input-dependent dictionary growth.\, FAD-KV addresses these issues with candidate-screened sparse coding and fixed-size dictionary replacement, as shown in Figure~\ref{fig:fadkv_overview}.

\subsection{Candidate-Screened Sparse Coding}

A standard sparse approximation solver to find \(\bm{y}\) in~\eqref{eq:sparse_inference} is OMP, which greedily selects atoms using the current residual and refits coefficients after each step.\, Although effective, OMP repeatedly scans the full dictionary and is costly when applied to every newly generated key and value vector.\, We assume dictionary atoms are $\ell_2$-normalized after offline training and after each adaptive replacement.\, 
For an input key or value vector \(\bm{x}\), FAD-KV computes
\setlength{\abovedisplayskip}{4pt}
\setlength{\belowdisplayskip}{4pt}
\begin{equation}
\bm{c}=|\bm{D}^{\top}\bm{x}|,\qquad
c_{\max}(\bm{x})=\|\bm{c}\|_{\infty}/\|\bm{x}\|_2,\qquad
\mathcal{C}=\mathrm{TopK}(\bm{c},n).
\label{eq:screening}
\end{equation}
Here, \(\bm{c}\in\mathbb{R}^{N}\) contains the absolute atom-input correlation scores, \(c_{\max}(\bm{x})\) is a lightweight confidence score, \(\mathcal{C}\) is the index set of the Top-\(n\) atoms with the largest scores \(|\bm{d}_j^\top\bm{x}|\), and $n\ll N$ is the candidate-pool size.\, If \(c_{\max}(\bm{x})\ge\tau_c\), FAD-KV restricts sparse coding to atoms in \(\mathcal{C}\); otherwise, it uses the conservative full-dictionary OMP solver.\, The score $c_{\max}(\bm{x})$ measures whether the current vector has at least one strong match in the offline dictionary.\,
When $c_{\max}(\bm{x})$ is relatively small, the Top-$n$ pool is less reliable because even the best atom has weak alignment with $\bm{x}$.\,
We therefore use $\tau_c$ as a conservative fallback threshold: if $c_{\max}(\bm{x})<\tau_c$, we disable screening and use full OMP.\,
In our experiments, $\tau_c$ is fixed on a small calibration split and then kept unchanged across evaluated tasks.\, In the screened case, i.e., when \(c_{\max}(\bm{x})\ge\tau_c\), FAD-KV solves for an \(s\)-sparse code \(\bm{y}\in\mathbb{R}^{N}\) whose nonzero entries are restricted to \(\mathcal{C}\):
\(
\|\bm{y}\|_0 \le s,
y_j=0\ \text{for all } j\notin\mathcal{C},
\hat{\bm{x}}=\bm{D}\bm{y}
\).\, The compressed cache stores only the nonzero indices and coefficients of \(\bm{y}\), while the code itself remains an \(N\)-dimensional sparse vector over the original dictionary.\, 

After screening, the remaining question is how to compute this restricted sparse code using only atoms indexed by \(\mathcal{C}\).\, We next describe two choices: greedy marginal coding, which selects nonzero entries of \(\bm{y}\) sequentially using residual scores, and pool-ridge coding, which estimates coefficients over the whole pool and then keeps the $s$ largest-magnitude entries.\, Additional details on full OMP and candidate-pruned greedy marginal coding are provided in Appendix~\ref{app:sparse_coding}.

\subsubsection{Greedy Marginal Coding}

After one-shot screening in~\eqref{eq:screening}, the goal is to compute an \(s\)-sparse code \(\bm{y}\in\mathbb{R}^{N}\) whose nonzero entries are restricted to the candidate index set \(\mathcal{C}\).\, Our default fast path is candidate-pruned greedy marginal coding.\, It keeps the residual-aware selection logic of OMP but restricts the search to $\mathcal{C}$.\, Starting from $\mathcal{S}_0=\emptyset$ and $\bm{r}_0=\bm{x}$, at step $t$ we select
\begin{equation}
j_t
=
\arg\max_{j\in\mathcal{C}\setminus \mathcal{S}_{t-1}}
\frac{(\bm{d}_j^\top \bm{r}_{t-1})^2}
{\|\bm{d}_j\|_2^2+\varepsilon},
\qquad
\mathcal{S}_t=\mathcal{S}_{t-1}\cup\{j_t\}.
\label{eq:greedy_marginal_selection}
\end{equation}
Here, \(\mathcal{S}_t\) is the set of selected atom indices after \(t\) steps.\, We then refit coefficients on the selected atoms and update the residual:
\[
\bm{\beta}_t
=
\arg\min_{\bm{\beta}\in\mathbb{R}^{|\mathcal{S}_t|}}
\|\bm{x}-\bm{D}_{\mathcal{S}_t}\bm{\beta}\|_2^2+\lambda\|\bm{\beta}\|_2^2,
\qquad
\bm{r}_t=\bm{x}-\bm{D}_{\mathcal{S}_t}\bm{\beta}_t.
\]
Here, \(\varepsilon>0\) is a small constant for numerical stability and
\(\lambda\ge0\) is a regularization parameter~\citep{hoerl1970ridge}.\,
After $s$ selections, let $\mathcal{S}_s=(j_1,\dots,j_s)$ and $\bm{\beta}_s\in\mathbb{R}^{s}$ be the final support and coefficients.\, We store $\bm{y}\in\mathbb{R}^{N}$ by
\[
y_j= (\bm{\beta}_s)_p, \; \text{if } j=j_p \text{ for some } p\in\{1,\dots,s\}, \qquad y_j=0, \; \text{otherwise}.
\]
Thus, the nonzero indices of \(\bm{y}\) are selected from \(\mathcal{C}\), and the compressed cache stores these nonzero indices and coefficients.\, Intuitively, after some atoms have already been selected, the residual \(\bm{r}_{t-1}\) is the part of \(\bm{x}\) that is not yet explained by the current approximation.\, Greedy marginal coding therefore selects, from the remaining indices in \(\mathcal{C}\), the atom whose direction is most aligned with this residual.\, Compared with full OMP, the main change is the search domain: FAD-KV searches over \(\mathcal{C}\) instead of all \(N\) atoms.\, Thus, if the screening step captures the useful atoms, greedy marginal coding can approximate OMP while avoiding repeated full-dictionary scans.

\subsubsection{Pool-Ridge Coding}

We also provide a cheaper one-shot variant for identifying sparse code \(\bm{y}\) that replaces sequential marginal selection in~\eqref{eq:greedy_marginal_selection} with a single ridge solver on the screened pool \(\mathcal{C}\).\, We solve
\[
\bm{\beta}_{\mathcal{C}}
=
\arg\min\nolimits_{\bm{u}\in\mathbb{R}^{n}}
\|\bm{x}-\bm{D}_{\mathcal{C}}\bm{u}\|_2^2+\lambda\|\bm{u}\|_2^2
\quad\Rightarrow\quad
\bm{\beta}_{\mathcal{C}}
=
(\bm{D}_{\mathcal{C}}^\top\bm{D}_{\mathcal{C}}+\lambda I)^{-1}\bm{D}_{\mathcal{C}}^\top\bm{x}.
\]
Intuitively, this pooled solver estimates coefficients jointly rather than scoring each atom independently.\, When several candidate atoms are highly correlated with each other, raw correlation may assign all of them large scores because they explain a similar direction of \(\bm{x}\).\, The ridge solution uses the Gram matrix \(\bm{D}_{\mathcal{C}}^\top\bm{D}_{\mathcal{C}}\), so redundant atoms compete with each other in the joint fit and can receive smaller coefficients.\, We then keep the Top-\(s\) entries of \(\bm{\beta}_{\mathcal{C}}\) by magnitude and map them back to the full dictionary code \(\bm{y}\in\mathbb{R}^{N}\):
\[
y_j= (\bm{\beta}_{\mathcal{C}})_j, \; j\in \mathrm{TopK}(|\bm{\beta}_{\mathcal{C}}|,s), \qquad y_j=0, \; \text{otherwise}.
\]
Here, \(\bm{\beta}_{\mathcal{C}}\) is indexed by atoms in \(\mathcal{C}\), so \(\mathrm{TopK}(|\bm{\beta}_{\mathcal{C}}|,s)\) returns the \(s\) selected atom indices from \(\mathcal{C}\).\, Thus, the nonzero entries of \(\bm{y}\) are restricted to the screened pool.\, Unlike raw top-$s$ correlation that selects atoms with indices \(\mathrm{TopK}(|\bm{D}^{\top}\bm{x}|,s)\), this pooled solver accounts for correlations among atoms through the Gram matrix $\bm{D}_{\mathcal{C}}^\top\bm{D}_{\mathcal{C}}$.\, Thus, atoms that are individually relevant but redundant with other selected atoms can receive smaller coefficients, giving a one-shot correction toward a more diverse support.\, This variant avoids sequential residual updates and is therefore simpler than greedy marginal, but it can be less faithful to OMP because it performs only one pooled correction.\, We include it as a secondary fast path and study it separately in the ablations.\, Appendix~\ref{app:pool_ridge_details} gives a first-order view of this redundancy correction.

\subsection{Fixed-Size Adaptive Dictionary}

Fast sparse coding reduces encoding cost, but the offline dictionary may still mismatch the test-time decoding distribution.\, FAD-KV monitors the relative error \(e(\bm{x};\bm{D})=\|\bm{x}-\bm{D}\bm{y}\|_2/\|\bm{x}\|_2\), where $\bm{y}$ is the sparse code produced by the current encoder.\, If \(e(\bm{x};\bm{D})>\delta\), adaptation is triggered.\, The threshold $\delta>0$ controls the adaptivity-stability trade-off.\, A smaller $\delta$ triggers more frequent updates and may reduce reconstruction error, but it can also replace atoms more aggressively.\, A larger $\delta$ makes the dictionary more stable but may miss shifted KV states.\, We set $\delta$ on a small calibration split and keep it fixed for all downstream evaluations; the default values are reported in Appendix~\ref{app:experimental_details}.

A direct adaptive strategy appends a new atom from the current vector or residual, but repeated appends make the dictionary grow during generation.\, FAD-KV instead keeps the dictionary size fixed by replacing atoms that are inactive in the current compressed cache.\, Let \(u_j\) be the
number of currently cached sparse codes whose \(j\)-th entry is nonzero:
\(
u_j
=
\sum_{\bm{y}\in\mathcal{Y}_{\mathrm{cache}}}
\mathbf{1}\{y_j\neq 0\},
\mathcal{U}_0=\{j\mid u_j=0\},
\)
where \(\mathcal{Y}_{\mathrm{cache}}\) denotes the set of sparse codes currently
stored in the compressed KV cache.\, Thus, \(\mathcal{U}_0\) contains atom
indices that are not referenced by any cached vector.\, When adaptation is triggered and $\mathcal{U}_{0}$ is nonempty, we construct a new atom from the normalized reconstruction residual and replace one inactive atom:
\[
\bm{a}_{\mathrm{new}}
=
\frac{\bm{x}-\bm{D}\bm{y}}{\|\bm{x}-\bm{D}\bm{y}\|_2+\epsilon},
\qquad
\bm{d}_{j^\star}\leftarrow \bm{a}_{\mathrm{new}},
\qquad
j^\star\in\mathcal{U}_{0}.
\]
Although one could replace the least-used atom $j^\star=\arg\min_j u_j$, replacing an atom with positive current usage would change the reconstruction of cached vectors that reference it.\,
We therefore restrict replacement to $\mathcal{U}_0$, so that no cached sparse code becomes invalid.\,
Since \(u_{j^\star}=0\), no currently stored sparse code uses atom \(\bm{d}_{j^\star}\).\, As a result, previously encoded KV entries remain valid and need not be re-encoded.\,
We then re-encode the current vector over the updated dictionary before storing its sparse code.\,
If $\mathcal{U}_{0}$ is empty, FAD-KV skips the update instead of replacing an active atom.\,
Thus, FAD-KV adapts to test-time activations without global cache updates or dictionary growth.

\section{Theoretical Analysis}
\label{sec:theory}

We analyze the computational and memory cost of FAD-KV and relate KV
reconstruction error to the resulting attention distortion.\,
Full derivations, proofs, and a weak-submodular interpretation of greedy atom
selection are provided in Appendix~\ref{app:theory}.

\subsection{Complexity and Memory Cost}
\label{sec:complexity_memory}

Let \(N\) be the dictionary size, \(n\ll N\) the screened pool size, \(s\) the
sparsity, \(d\) the atom dimension, \(T\) the number of cached KV vectors, and
\(B=b_{\mathrm{idx}}+b_{\mathrm{coef}}\) the per-nonzero storage cost.\,
Full OMP repeatedly scans all \(N\) atoms for \(s\) greedy steps, giving an
atom-search cost of \(O(sNd)\).\, Screened greedy marginal coding instead
requires \(O(Nd+snd)\), while Pool-Ridge removes sequential greedy search and
uses \(O(Nd+n^2d+n^3)\).\, The compressed cache requires
\(O(TsB)\) storage in all cases.\, FAD-KV further keeps dictionary memory fixed
at \(O(Nd)\), whereas append-only adaptation grows to \(O((N+A)d)\) after
\(A\) adaptive updates.\, Full derivations are given in
Appendix~\ref{app:candidate_pruned_gm} and
Appendix~\ref{app:proof_bounded_memory}.

\subsection{Attention Perturbation from KV Reconstruction Error}

\begin{proposition}[Attention perturbation under KV reconstruction]
\label{prop:attn_perturb}
Fix one layer, head, and decoding step $t$, and let
$[t-1]:=\{1,\ldots,t-1\}$.
For $j\in[t-1]$, define
$z_{t,j}=\langle \bm{q}_t,\bm{k}_j\rangle/\sqrt{d}$ and
$\hat z_{t,j}=\langle \bm{q}_t,\hat{\bm{k}}_j\rangle/\sqrt{d}$,
with $\bm{\alpha}_t=\softmax(\bm{z}_t)$ and
$\hat{\bm{\alpha}}_t=\softmax(\hat{\bm{z}}_t)$.
The corresponding attention outputs are
$\bm{o}_t=\sum_{j\in[t-1]}\alpha_{t,j}\bm{v}_j$ and
$\hat{\bm{o}}_t=\sum_{j\in[t-1]}\hat{\alpha}_{t,j}\hat{\bm{v}}_j$.

Assume $\|\bm{q}_t\|_2\le Q$ and
$\|\hat{\bm{v}}_j\|_2\le \hat V$ for all $j\in[t-1]$, and let
\[
E_k=\max_{j\in[t-1]}\|\bm{k}_j-\hat{\bm{k}}_j\|_2,
\qquad
E_v=\max_{j\in[t-1]}\|\bm{v}_j-\hat{\bm{v}}_j\|_2.
\]
Then
\[
\|\hat{\bm{z}}_t-\bm{z}_t\|_\infty
\le QE_k/\sqrt d,
\qquad
\|\hat{\bm{\alpha}}_t-\bm{\alpha}_t\|_1
\le 2QE_k/\sqrt d,
\qquad
\|\hat{\bm{o}}_t-\bm{o}_t\|_2
\le 2\hat VQE_k/\sqrt d+E_v.
\]
\end{proposition}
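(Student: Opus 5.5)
The plan is to prove the three inequalities in sequence, each feeding into the next. The argument goes score perturbation, then attention-weight perturbation, then output perturbation.

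\emph{Scores.} For each $j\in[t-1]$, linearity of the inner product gives
\[
\hat z_{t,j}-z_{t,j}=\langle \bm{q}_t,\hat{\bm{k}}_j-\bm{k}_j\rangle/\sqrt d .
\]
Cauchy--Schwarz bounds its magnitude by $\|\bm{q}_t\|_2\|\hat{\bm{k}}_j-\bm{k}_j\|_2/\sqrt d\le QE_k/\sqrt d$. Taking the maximum over $j$ gives the $\ell_\infty$ bound.

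\emph{Attention weights.} I will invoke the fact that softmax is $2$-Lipschitz from $\ell_\infty$ to $\ell_1$. This is the step I expect to need the most care, so I would verify it directly rather than cite it. Set $\bm{w}(\theta)=\bm{z}_t+\theta(\hat{\bm{z}}_t-\bm{z}_t)$ and $\bm{p}(\theta)=\softmax(\bm{w}(\theta))$. The Jacobian of softmax is $\mathrm{diag}(\bm{p})-\bm{p}\bm{p}^\top$, so for $\bm{h}=\hat{\bm{z}}_t-\bm{z}_t$,
\[
\frac{d}{d\theta}p_j=p_j\,(h_j-\langle\bm{p},\bm{h}\rangle).
\]
Hence
\[
\Big\|\tfrac{d}{d\theta}\bm{p}\Big\|_1=\sum_j p_j|h_j-\bar h|\le \max_j|h_j-\bar h|\le 2\|\bm{h}\|_\infty ,
\]
where $\bar h=\langle\bm{p},\bm{h}\rangle$ is a convex combination of the $h_j$ and so lies in $[-\|\bm h\|_\infty,\|\bm h\|_\infty]$. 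Integrating over $\theta\in[0,1]$ yields the $\ell_1$ bound, and substituting the score bound gives $2QE_k/\sqrt d$. One could sharpen the constant to $1$ using $\max h-\min h\le 2\|\bm h\|_\infty$ and a centering argument, but the stated factor $2$ suffices.

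\emph{Outputs.} Add and subtract $\sum_j\alpha_{t,j}\hat{\bm{v}}_j$ to split the error as
\[
\hat{\bm{o}}_t-\bm{o}_t=\sum_j(\hat\alpha_{t,j}-\alpha_{t,j})\hat{\bm{v}}_j+\sum_j\alpha_{t,j}(\hat{\bm{v}}_j-\bm{v}_j).
\]
By the triangle inequality, the first term is at most $\hat V\|\hat{\bm{\alpha}}_t-\bm{\alpha}_t\|_1$ using $\|\hat{\bm v}_j\|_2\le\hat V$. The second is at most $E_v\sum_j\alpha_{t,j}=E_v$, since $\bm\alpha_t$ is a probability vector. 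Combining with the weight bound gives
\[
\|\hat{\bm{o}}_t-\bm{o}_t\|_2\le 2\hat VQE_k/\sqrt d+E_v .
\]
The choice to pair the weight difference with $\hat{\bm v}_j$ rather than $\bm v_j$ is what makes the hypothesis on $\hat V$, not on the true values, the one that appears.
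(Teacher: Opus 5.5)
Your proposal is correct and follows the paper's proof step for step. Both use Cauchy--Schwarz for the score bound, the $\ell_\infty$-to-$\ell_1$ Lipschitz property of softmax with constant $2$, and the same add-and-subtract decomposition that pairs $\hat\alpha_{t,j}-\alpha_{t,j}$ with $\hat{\bm{v}}_j$. The only difference is that you verify the softmax Lipschitz bound yourself, via the Jacobian $\mathrm{diag}(\bm{p})-\bm{p}\bm{p}^\top$ and integration along the segment, which is valid; the paper simply cites this bound as standard.
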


Proposition~\ref{prop:attn_perturb} shows that key reconstruction error
perturbs attention scores and weights, while value reconstruction error
directly affects the attention output.
This motivates the reconstruction-error analysis in
Section~\ref{sec:efficiency_ablation}; the proof is given in
Appendix~\ref{app:proof_attn_perturb}.





%% file: Sections/Chapter5.tex
\section{Experiments}
\label{sec:experiments}

We evaluate FAD-KV on reasoning and long-context benchmarks, focusing on
whether online adaptation improves robustness to test-time KV shift, whether
fixed-size replacement preserves adaptive benefits without dictionary growth,
and whether screened sparse coding reduces cost while maintaining task quality.
Importantly, append-style adaptive baselines operate under a less constrained
memory budget because they can continuously add atoms during decoding, whereas
FAD-KV keeps the dictionary size fixed.\ We therefore evaluate FAD-KV primarily
as a \textbf{quality--efficiency trade-off} rather than as a method designed to maximize
accuracy alone.

\vspace{-.2cm}

\subsection{Experimental Setup}

We evaluate \textbf{Llama-3.2-3B-Instruct},
\textbf{Llama-3.1-8B-Instruct}~\citep{dubey2024llama}, and
\textbf{Qwen3-4B-Instruct}~\citep{yang2025qwen3}.
For each model, separate key and value dictionaries are trained offline from
WikiText-103 KV activations~\citep{merity2016wikitext} and reused across tasks.
All methods use the same prompts, decoding setup, and evaluation protocol unless
stated otherwise.\ All reported results are \textbf{averaged over three} independent runs.

We compare \textbf{Lexico}, which uses a fixed offline dictionary with full OMP,
and \textbf{Lexico-ADP}, which adds append-style online adaptation.
\textbf{GM} and \textbf{PL} apply greedy-marginal and pooled-ridge coding,
respectively, within the screened pool $\mathcal{C}$ while retaining append-style
dictionary growth.
\textbf{FAD-KV (GM)} and \textbf{FAD-KV (PL)} use the same coding paths but
replace inactive atoms instead of appending new ones, keeping dictionary size
fixed.
Thus, GM/PL isolate the effect of fast sparse coding under adaptive growth,
whereas FAD-KV isolates the additional effect of enforcing a fixed dictionary
budget.
Additional details are provided in Appendix~\ref{app:experimental_details}.
\vspace{-.2cm}

\subsection{Reasoning Benchmarks}

We evaluate reasoning on GSM8K~\citep{cobbe2021training},
MATH500~\citep{hendrycks2021measuring, lightman2023let},
GPQA-D~\citep{rein2023gpqa}, and LongReason~\citep{ling2025longreason}, covering mathematical,
scientific, and long-form reasoning settings.
Table~\ref{tab:reasoning_main} shows that append-style adaptation often
improves over the fixed Lexico dictionary, indicating that online adaptation
can benefit test-time KV reconstruction when dictionary growth is allowed.\ Despite operating under a fixed dictionary budget, FAD-KV recovers part of
this adaptive benefit across multiple model--task settings, although the gain
varies across models and sparse-coding paths.

\begin{table*}[ht]
\centering
\caption{
Accuracy (\%) on reasoning benchmarks.
Best accuracy is bolded within each dictionary group.
}
\label{tab:reasoning_main}

\scriptsize
\setlength{\tabcolsep}{4pt}
\renewcommand{\arraystretch}{0.7}

\resizebox{\textwidth}{!}{%
\begin{tabular}{l|rrrr|rrrr|rrrr}
\toprule

\textbf{Method}
& \multicolumn{4}{c|}{\textbf{Llama-3.2-3B-Instruct}}
& \multicolumn{4}{c|}{\textbf{Llama-3.1-8B-Instruct}}
& \multicolumn{4}{c}{\textbf{Qwen3-4B-Instruct}}
\\

&
\textbf{GSM} & \textbf{MATH} & \textbf{GPQA} & \textbf{LongR.}
&
\textbf{GSM} & \textbf{MATH} & \textbf{GPQA} & \textbf{LongR.}
&
\textbf{GSM} & \textbf{MATH} & \textbf{GPQA} & \textbf{LongR.}
\\

\midrule
\multicolumn{13}{l}{\textit{Adaptive growth}} \\
\addlinespace[1pt]

Lexico-ADP
& 30.0 & 16.4 & 15.7 & \textbf{44.0}
& \textbf{80.0} & 41.4 & \textbf{28.3} & 64.0
& \textbf{65.6} & 31.0 & 21.7 & \textbf{26.0}
\\

GM (ours)
& 26.0 & 17.8 & \textbf{22.2} & 40.0
& 76.0 & \textbf{42.8} & 27.3 & 66.0
& 59.2 & 27.8 & \textbf{22.2} & 25.0
\\

PL (ours)
& \textbf{32.8} & \textbf{19.4} & 17.2 & 38.0
& 74.4 & 41.2 & 27.8 & \textbf{68.0}
& 64.4 & \textbf{33.6} & 21.7 & 20.0
\\

\midrule
\multicolumn{13}{l}{\textit{Fixed-size adaptation}} \\
\addlinespace[1pt]

Lexico
& \textbf{28.4} & 17.0 & 15.2 & 26.0
& \textbf{80.0} & 40.4 & 25.8 & 61.0
& 58.4 & 25.6 & 16.7 & 14.0
\\

FAD-KV (GM, ours)
& 22.0 & \textbf{18.0} & 17.2 & \textbf{32.0}
& 76.0 & \textbf{42.8} & \textbf{27.3} & \textbf{65.0}
& 59.6 & 24.6 & \textbf{20.2} & \textbf{21.0}
\\

FAD-KV (PL, ours)
& 26.8 & 12.8 & \textbf{17.7} & 17.0
& 70.8 & 41.6 & 22.7 & 63.0
& \textbf{61.6} & \textbf{28.4} & 17.7 & 18.0
\\

\bottomrule
\end{tabular}
}

\vspace{-5pt}
\end{table*}

\subsection{RULER}
\label{sec:ruler}

We evaluate long-range retrieval on RULER~\citep{hsieh2024ruler} at 8K and
16K context lengths, covering five representative tasks.\ Table~\ref{tab:ruler_detail} shows that candidate-screened GM and PL closely match
the full-OMP Lexico-ADP baseline across both models, indicating that screening
preserves long-range retrieval while reducing sparse-coding cost.
FAD-KV remains competitive with static Lexico under a fixed dictionary budget,
while the remaining gap to append-style adaptation reflects the trade-off
introduced by fixed-size replacement.

\begin{table*}[t]
\centering
\scriptsize
\renewcommand{\arraystretch}{0.6}

\caption{
RULER results (\%) at 8K and 16K context lengths.
We report NIAH single-key (\textsc{S1}), multikey (\textsc{MK}),
multivalue (\textsc{MV}), multiquery (\textsc{MQ}), and FWE,
together with their average.
}
\label{tab:ruler_detail}

\resizebox{\textwidth}{!}{%
\begin{tabular}{ll|rrrrrr|rrrrrr}
\toprule
\textbf{Model}
& \textbf{Method}
& \multicolumn{6}{c|}{\textbf{8K}}
& \multicolumn{6}{c}{\textbf{16K}}
\\
\cmidrule(lr){3-8}
\cmidrule(lr){9-14}

&&
\textbf{S1}
& \textbf{MK}
& \textbf{MV}
& \textbf{MQ}
& \textbf{FWE}
& \textbf{Avg.}
& \textbf{S1}
& \textbf{MK}
& \textbf{MV}
& \textbf{MQ}
& \textbf{FWE}
& \textbf{Avg.}
\\
\midrule

\multirow{6}{*}{\shortstack[l]{\textbf{Llama-3.2}\\\textbf{3B}}}
& Lexico-ADP
& 65.6 & 43.0 & 19.1 & 17.5 & 44.1 & \textbf{37.9}
& 47.6 & 28.8 & 13.2 & 9.4 & 55.1 & 30.8
\\

& GM (ours)
& 65.6 & 42.8 & 18.3 & 17.1 & 44.1 & 37.6
& 46.8 & 29.2 & 13.5 & 9.3 & 55.6 & \textbf{30.9}
\\

& PL (ours)
& 65.8 & 42.8 & 18.7 & 17.9 & 44.1 & \textbf{37.9}
& 47.4 & 29.4 & 13.4 & 9.5 & 54.3 & 30.8
\\

\cmidrule(lr){2-14}

& Lexico
& 66.0 & 44.8 & 14.5 & 11.9 & 35.3 & 34.5
& 46.6 & 28.6 & 10.9 & 6.8 & 40.7 & 26.7
\\

& FAD-KV (GM)
& 66.2 & 44.8 & 14.4 & 12.8 & 35.7 & \textbf{34.8}
& 48.0 & 29.4 & 11.2 & 7.8 & 40.3 & \textbf{27.3}
\\

& FAD-KV (PL)
& 64.4 & 43.0 & 12.7 & 11.6 & 36.7 & 33.7
& 47.6 & 26.6 & 9.8 & 6.8 & 41.2 & 26.4
\\

\midrule

\multirow{6}{*}{\shortstack[l]{\textbf{Qwen3}\\\textbf{4B}}}
& Lexico-ADP
& 95.5 & 94.0 & 70.0 & 54.0 & 78.5 & 78.4
& 96.0 & 92.0 & 67.2 & 50.5 & 97.0 & 80.5
\\

& GM (ours)
& 95.0 & 93.0 & 71.5 & 57.6 & 77.8 & 79.0
& 96.5 & 90.5 & 69.2 & 50.5 & 96.5 & 80.7
\\

& PL (ours)
& 97.0 & 93.0 & 72.4 & 60.5 & 77.2 & \textbf{80.0}
& 97.0 & 91.0 & 72.2 & 53.1 & 96.8 & \textbf{82.0}
\\

\cmidrule(lr){2-14}

& Lexico
& 94.5 & 95.0 & 57.0 & 29.4 & 78.8 & 70.9
& 95.5 & 92.0 & 54.8 & 29.5 & 96.8 & 73.7
\\

& FAD-KV (GM)
& 95.0 & 94.5 & 57.8 & 30.5 & 79.2 & \textbf{71.4}
& 95.0 & 91.5 & 54.1 & 33.0 & 96.0 & \textbf{73.9}
\\

& FAD-KV (PL)
& 96.5 & 93.5 & 56.1 & 30.2 & 78.8 & 71.0
& 96.0 & 91.5 & 53.4 & 32.2 & 96.2 & \textbf{73.9}
\\

\bottomrule
\end{tabular}%
}

\vspace{-5pt}
\end{table*}

\subsection{LongBench}

\begin{table*}[t]
\vspace{-.4cm}
\centering
\scriptsize
\renewcommand{\arraystretch}{0.7}

\caption{
Accuracy (\%) on LongBench.
Best results are bolded within each dictionary group.
}
\vspace{-.2cm}
\label{tab:longbench_main}

\resizebox{0.9\textwidth}{!}{%
\begin{tabular}{ll|cccccccc}
\toprule
\textbf{Model}
& \textbf{Method}
& \multicolumn{2}{c}{\textbf{Single-Doc}}
& \multicolumn{3}{c}{\textbf{Multi-Doc}}
& \textbf{Summ.}
& \multicolumn{2}{c}{\textbf{Code}}
\\

\cmidrule(lr){3-4}
\cmidrule(lr){5-7}
\cmidrule(lr){8-8}
\cmidrule(lr){9-10}

&&
\textbf{NrtvQA}
& \textbf{MF-en}
& \textbf{Hotpot}
& \textbf{2Wiki}
& \textbf{Musique}
& \textbf{SAMSum}
& \textbf{LCC}
& \textbf{RB-P}
\\

\midrule

\multirow{6}{*}{\shortstack[l]{\textbf{Llama-3.2}\\\textbf{3B}}}
& Lexico-ADP
& 19.50
& 39.94
& 46.99
& 38.52
& 21.09
& 5.60
& \textbf{26.64}
& 27.10
\\

& GM (ours)
& \textbf{19.97}
& 39.94
& \textbf{47.06}
& \textbf{38.63}
& 20.43
& 5.31
& 26.62
& 26.88
\\

& PL (ours)
& 19.20
& \textbf{40.15}
& 46.75
& 38.52
& \textbf{21.10}
& \textbf{5.70}
& 26.60
& \textbf{27.22}
\\

\cmidrule(lr){2-10}

& Lexico
& 16.99
& 38.95
& 46.66
& 37.68
& 20.37
& \textbf{5.70}
& \textbf{26.47}
& \textbf{26.98}
\\

& FAD-KV (GM)
& 18.33
& \textbf{38.40}
& 44.66
& 35.85
& \textbf{19.97}
& 5.33
& 26.45
& 26.75
\\

& FAD-KV (PL)
& \textbf{18.43}
& 38.29
& \textbf{44.86}
& \textbf{36.01}
& 19.46
& 5.33
& 26.43
& 26.66
\\

\midrule

\multirow{6}{*}{\shortstack[l]{\textbf{Llama-3.1}\\\textbf{8B}}}
& Lexico-ADP
& 26.12
& 53.16
& \textbf{55.25}
& 40.65
& 29.72
& \textbf{7.91}
& 26.35
& 25.37
\\

& GM (ours)
& 26.06
& \textbf{54.20}
& 54.31
& 40.65
& 30.81
& 7.84
& 26.28
& \textbf{25.93}
\\

& PL (ours)
& 25.93
& 53.94
& 54.32
& \textbf{41.06}
& \textbf{30.82}
& 7.77
& \textbf{26.35}
& 25.56
\\

\cmidrule(lr){2-10}

& Lexico
& \textbf{26.34}
& \textbf{54.02}
& \textbf{54.75}
& 40.59
& 30.67
& \textbf{7.83}
& 26.21
& 25.89
\\

& FAD-KV (GM)
& 26.09
& 54.00
& 54.32
& 40.65
& \textbf{30.82}
& \textbf{7.83}
& 26.28
& \textbf{25.99}
\\

& FAD-KV (PL)
& 26.02
& 52.88
& 54.12
& \textbf{41.48}
& 30.66
& 7.63
& \textbf{26.36}
& 25.83
\\

\midrule

\multirow{6}{*}{\shortstack[l]{\textbf{Qwen3}\\\textbf{4B}}}
& Lexico-ADP
& 22.67
& 33.42
& 51.40
& 31.31
& \textbf{18.59}
& 17.48
& \textbf{8.82}
& \textbf{11.76}
\\

& GM (ours)
& \textbf{23.32}
& 34.70
& \textbf{51.62}
& 31.62
& 18.14
& 16.94
& 8.21
& 11.36
\\

& PL (ours)
& 23.00
& \textbf{36.53}
& 51.15
& \textbf{34.04}
& 18.29
& \textbf{18.00}
& 7.89
& 9.15
\\

\cmidrule(lr){2-10}

& Lexico
& 6.71
& \textbf{27.70}
& 49.85
& \textbf{31.63}
& 16.61
& 16.33
& 12.49
& \textbf{18.63}
\\

& FAD-KV (GM)
& \textbf{20.86}
& 27.37
& \textbf{50.75}
& 30.95
& \textbf{18.32}
& \textbf{17.15}
& 13.41
& 18.55
\\

& FAD-KV (PL)
& 17.79
& 21.78
& 36.06
& 21.70
& 12.04
& \textbf{17.15}
& \textbf{14.65}
& \textbf{19.00}
\\

\bottomrule
\end{tabular}%
}
\end{table*}

We evaluate long-context performance on LongBench~\citep{bai2023longbench},
covering single-document QA, multi-document QA, summarization, and code
completion.
Table~\ref{tab:longbench_main} shows that append-style adaptation generally
achieves stronger task accuracy, while FAD-KV remains competitive with static
Lexico under a fixed dictionary budget.
This again reflects the central trade-off: append-style updates can improve quality while allowing dictionary growth,
whereas FAD-KV bounds memory and reduces sparse-coding cost.

\subsection{Efficiency and Analysis}
\label{sec:efficiency_ablation}

Table~\ref{tab:efficiency} reports measured decoding efficiency and KV-cache
memory.\ Candidate screening consistently improves decoding throughput over
full-OMP Lexico-ADP.\ In particular, FAD-KV (PL) improves single-sequence
throughput by $12.5\%$ on Llama-3.2-3B and $46.1\%$ on Llama-3.1-8B.\ Although
Full KV remains faster for a single sequence because it avoids sparse
encoding overhead, its per-sequence KV footprint is $4$--$7\times$ larger, which
limits concurrent batch capacity under a fixed memory budget.\ For example, under a 24\,GB memory budget (largest batch that fits,
same GPU), FAD-KV (PL) delivers higher aggregate throughput than Full KV on
Llama-3.2-3B: $1.16\times$ at a 4K context (324 vs.\ 280 tok/s, 9.3 vs.\
21.7\,GB) and $2.0\times$ at 16K (141 vs.\ 71 tok/s).

Figure~\ref{fig:efficiency_ablation} further analyzes the behavior of FAD-KV.
Panel (a) shows that append-style adaptation steadily increases dictionary
size during decoding, whereas FAD-KV remains at its initial budget.
Panel (b) shows that online adaptation reduces reconstruction error relative
to the static Lexico dictionary, and FAD-KV retains this benefit without
dictionary growth.
Panel (c) studies sensitivity to the screening threshold $\tau_c$ and
adaptation threshold $\delta$: performance is relatively stable across a
broad range of screening thresholds, while the adaptation threshold has a
stronger effect on accuracy.
Finally, Panel (d) shows the interaction between dictionary size $N$ and
sparsity $s$; larger representation budgets generally improve accuracy,
with the strongest results obtained at larger $N$ and $s$.

\begin{table*}[t]
\centering
\scriptsize
\vspace{-.2cm}
\renewcommand{\arraystretch}{0.3}
\caption{
Decoding efficiency and KV-cache memory.
}
\vspace{-.2cm}
\label{tab:efficiency}

\resizebox{0.8\textwidth}{!}{%
\begin{tabular}{llccccc}
\toprule
\textbf{Model}
& \textbf{Method}
& \textbf{Cost} $\downarrow$
& \textbf{Latency} $\downarrow$
& \textbf{Decode} $\uparrow$
& \textbf{Speedup} $\uparrow$
& \textbf{Per-seq KV} $\downarrow$ \\
&&&
\textbf{(ms/tok)}
&
\textbf{(tok/s)}
&
\textbf{(compare Lexico)}
&
\textbf{(GB)} \\
\midrule

\multirow{6}{*}{\shortstack[l]{Llama-3.2\\3B}}
& Full KV
& -- & 26.41 & 37.86 & -- & 0.124 \\

& Lexico-ADP
& 1.00 & 53.63 & 18.64 & -- & 0.024 \\

& GM (ours)
& 0.19 & 51.22 & 19.52 & $+4.7\%$ & 0.025 \\

& PL (ours)
& 0.13 & 49.95 & 20.02 & $+7.4\%$ & 0.031 \\

& FAD-KV (GM)
& 0.19 & 52.00 & 19.23 & $+3.1\%$ & 0.020 \\

& \textbf{FAD-KV (PL)}
& \textbf{0.13}
& \textbf{47.66}
& \textbf{20.98}
& \textbf{$+12.5\%$}
& \textbf{0.019} \\

\midrule

\multirow{6}{*}{\shortstack[l]{Llama-3.1\\8B}}
& Full KV
& -- & 30.22 & 33.09 & -- & 0.142 \\

& Lexico-ADP
& 1.00 & 75.62 & 13.22 & -- & 0.040 \\

& GM (ours)
& 0.09 & 69.52 & 14.38 & $+8.8\%$ & 0.040 \\

& PL (ours)
& 0.06 & 56.19 & 17.80 & $+34.6\%$ & 0.045 \\

& FAD-KV (GM)
& 0.09 & 66.61 & 15.01 & $+13.5\%$ & 0.040 \\

& \textbf{FAD-KV (PL)}
& \textbf{0.06}
& \textbf{51.76}
& \textbf{19.32}
& \textbf{$+46.1\%$}
& \textbf{0.040} \\

\bottomrule
\end{tabular}}
\vspace{-.2cm}
\end{table*}

\begin{figure*}[h!]
    \centering
    \includegraphics[width=0.7\textwidth]{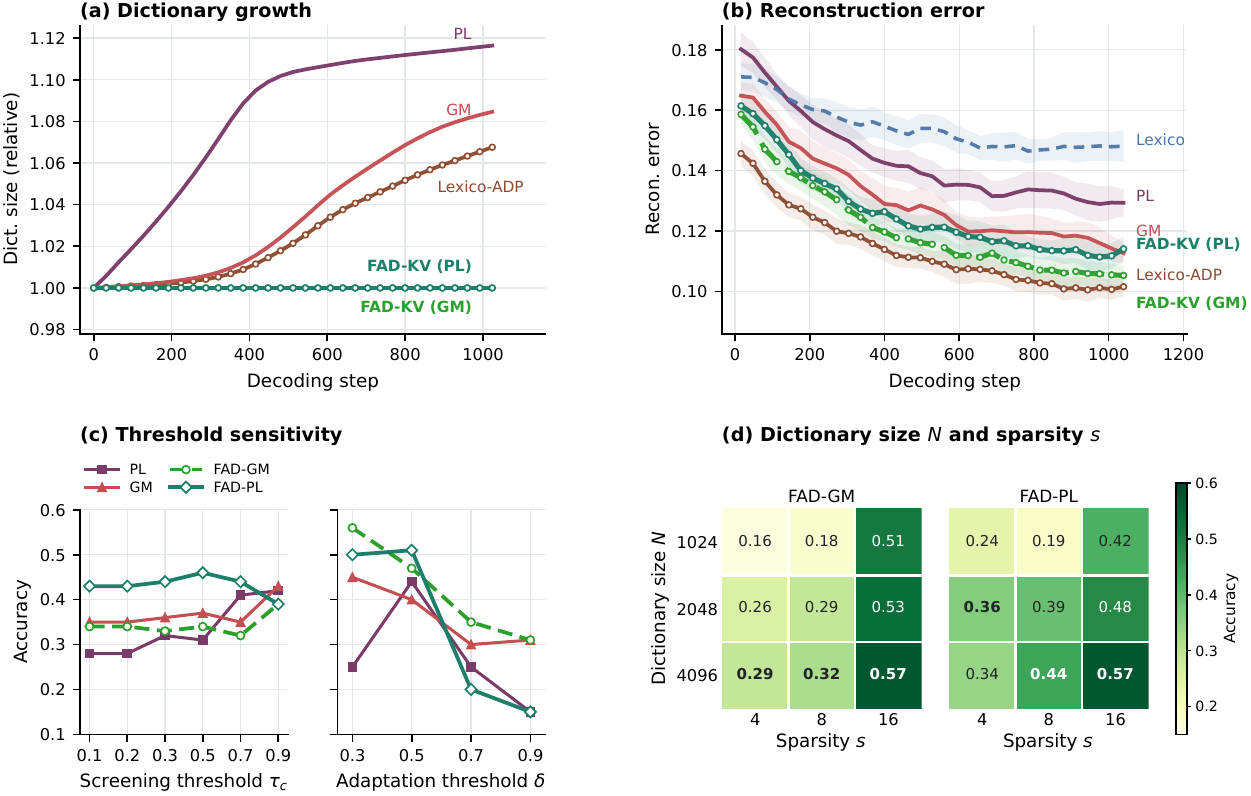}
    \caption{
\textbf{Adaptive behavior and sensitivity analysis.}
(a) Normalized dictionary size during Qwen3-4B-Instruct GSM8K decoding.
(b) Reconstruction error on the same setting, averaged over K/V streams,
layers, and active examples.
(c) Sensitivity of MATH500 accuracy to $\tau_c$ and $\delta$.
(d) Sensitivity to dictionary size $N$ and sparsity $s$.
}
    \label{fig:efficiency_ablation}
    \vspace{-10pt}
\end{figure*}

\vspace{-5pt}

%% file: Sections/Chapter6.tex
\section{Conclusion and Limitations}
\label{sec:conclusion}

We propose FAD-KV, a fast adaptive dictionary KV caching framework for efficient reasoning decoding.\
FAD-KV addresses two practical bottlenecks of dictionary-based KV compression:
iterative sparse-coding latency and append-only dictionary growth.\
By combining candidate-screened sparse coding with fixed-size atom replacement,
FAD-KV adapts to test-time KV states under a strict dictionary budget.\
Our analysis covers sparse-coding cost, dictionary and cache memory,
attention perturbation from KV reconstruction error, and a weak-submodular view
of residual-aware atom selection.\
Future work includes improved replacement policies, adaptive screening, and
evaluation on longer generations and broader model families.

\section*{AI Assistance Disclosure}
Large language models were used to assist with language editing and manuscript
polishing, including improvements to grammar, clarity, and readability. They
were also used to assist with implementing parts of the research code. All
model-assisted code was reviewed, tested, and verified by the authors. The
research ideas, methodology, experimental design, analysis, and scientific
conclusions were developed and validated by the authors, who take full
responsibility for the content of the paper.